\documentclass{article}

\usepackage{microtype}
\usepackage{graphicx}
\usepackage{subcaption}
\usepackage{booktabs} 
\usepackage{siunitx}    
\usepackage{hyperref}
\usepackage{amsmath}
\usepackage{amssymb}
\usepackage{mathtools}
\usepackage{amsthm}
\usepackage{geometry}

\usepackage{algorithm}
\usepackage{algorithmic}

\usepackage[numbers]{natbib}

\usepackage[capitalize,noabbrev]{cleveref}

\Crefname{example}{Example}{Examples}
\crefname{example}{example}{examples}

\theoremstyle{plain}
\newtheorem{theorem}{Theorem}[section]
\newtheorem{proposition}[theorem]{Proposition}
\newtheorem{lemma}[theorem]{Lemma}

\theoremstyle{definition}

\theoremstyle{remark}

\newtheorem{example}{Example}
\newcommand{\mnote}[1]{}
\newcommand{\YC}[1]{}
\newcommand{\AB}[1]{}
\renewcommand{\mnote}[1]{\textcolor{purple}{\textbf{[MU: #1]}}}
\renewcommand{\YC}[1]{\textcolor{cyan}{\textbf{[YC: #1]}}}
\renewcommand{\AB}[1]{\textcolor{blue}{\textbf{[AB: #1]}}}

\newcommand{\ournet}{{\text{SnareNet}}}
\newcommand{\adarel}{{\text{AdaRel}}}
\newcommand{\dc}{\text{DC3}}
\newcommand{\hardnet}{\text{HardNet}}

\newcommand{\pinet}{\text{$\Pi$net}}
\newcommand{\hproj}{\text{HProj}}
\newcommand{\optnet}{\text{OptNet}}

\newcommand{\relu}{\mathrm{ReLU}}

\newcommand{\st}{\text{ subject to }}
\newcommand{\ie}{\text{i.e., }}
\newcommand{\eg}{\text{e.g., }}

\newcommand{\ineq}{\text{ineq}}

\newcommand{\train}{\text{train}}
\newcommand{\test}{\text{test}}
\newcommand{\preimage}{\text{Preimage}}

\newcommand{\nom}{\text{nom}}

\DeclareMathOperator*{\argmin}{arg\,min}

\DeclareMathOperator*{\gmean}{gmean}
\DeclareMathOperator*{\mean}{mean}

\newcommand{\mbb}[1]{\mathbb{#1}}
\newcommand{\mcal}[1]{\mathcal{#1}}
\newcommand{\mbf}[1]{\mathbf{#1}}
\newcommand{\mbd}[1]{\boldsymbol{#1}}

\title{{\ournet}: Flexible Repair Layers for Neural Networks with Hard Constraints}

\author{Ya-Chi Chu\footnote{Department of Mathematics, Stanford University, CA, United States; email: ycchu97@stanford.edu} 
\and Alkiviades Boukas\footnote{Institute for Computational and Mathematical Engineering, Stanford University, CA, United States; email: aboukas@stanford.edu}
\and Madeleine Udell\footnote{Department of Management Science and Engineering, Stanford University, CA, United States; email: udell@stanford.edu} 
}

\begin{document}
\maketitle

\begin{abstract}
Neural networks are increasingly used as fast surrogate models across various domains, but unconstrained predictions can violate physical, operational, or safety requirements.
We propose SnareNet, a feasibility-controlled architecture to learn mappings whose outputs must satisfy input-dependent constraints.
SnareNet appends a differentiable repair layer that navigates in the constraint map's range space,
steering iterates toward feasibility and producing a repaired output that satisfies constraints to a user-specified tolerance.
We stabilize end-to-end training by adaptive relaxation,
a new training paradigm that snares the neural network at initialization and shrinks it into the feasible set,
enabling early exploration and strict feasibility later in training.
On optimization learning and trajectory planning benchmarks,
SnareNet consistently attains improved objective quality while satisfying constraints more reliably than prior work,
and it is the first to enforce \emph{non-convex} constraints at medium-to-high precision robustly across instances.
\end{abstract}

\section{Introduction}

Deep learning models have emerged as powerful tools across diverse applications, from computational biology \cite{jumper2021highly,angermueller2016deep} and drug discovery \cite{gomez2018automatic,wu2018moleculenet} to robotics \cite{levine2016end} and autonomous systems \cite{grigorescu2020survey}. Their success stems from an ability to learn complex patterns from data and make accurate predictions in high-dimensional spaces.
Neural networks (NNs) are increasingly being deployed as fast surrogate models that can replace or supplement traditional computational methods. For instance, in domains where the same type of problem must be solved frequently in real-time---such as power systems operation \cite{pan2021deepopf} and robotic control \cite{williams2017information}---or where individual solutions require expensive computation---such as weather forecasting \cite{lam2023learning,bonev2025fourcastnet} and fluid dynamics simulations \cite{kochkov2021machine}---NNs offer orders-of-magnitude speedups while maintaining competitive accuracy.
Standard NN architectures deliver unconstrained outputs.
While simple constraints like simplex constraints and coordinate-wise bounds can be readily enforced through architecture design,
it is challenging to enforce complex constraints, even for linear constraints.
Research on \emph{constrained neural networks} is growing in the past few years \cite{amos2017optnet,lu2021physics,iftakher2025physics}, driven by several compelling needs.

First, many critical applications require constrained outputs to ensure validity and safety.
In learning-based control policies, outputs must satisfy safety constraints and physical actuator limits to prevent catastrophic failures \cite{ames2016control}.
In applications such as optimal power flow in electrical grids or resource allocation in supply chains, outputs must satisfy operational constraints including power balance equations and capacity limits to deliver an executable solution \cite{pan2020deepopf}.
Violating these constraints can lead to solutions that are not only suboptimal but potentially dangerous or nonsensical in practice.

Secondly, constraints provide a mechanism to inject domain knowledge and inductive biases into NNs, which can potentially improve generalization, particularly when training data is limited.
In hierarchical time series forecasting \cite{cini2023graph,stratigakos2022end,rangapuram2021end}, enforcing aggregation constraints across levels of the hierarchy ensures coherent forecasts.
By encoding known structural properties of the problem into the network architecture, we can reduce the effective hypothesis space and regularize the learning process, leading to improved performance and robustness.

The most straightforward approach to impose constraints on NNs penalizes constraint violations in the loss function \cite{platt1987constrained,zhang1992lagrange,marquezneila2017imposinghardconstraintsdeep}, known as \emph{soft constraint methods}.
However, this approach provides no guarantees on constraint satisfaction at inference time. Typical gradient-based training with soft constraints produces constraint violations on the order of $10^{-2}$ or worse \cite{donti2021dc3}, leading to physically nonsensical or operationally infeasible solutions.
Perhaps more surprisingly, our numerical experiments in \Cref{app:soft} reveal that soft constraint training can be counterproductive even as a warm-start strategy.
These observations underscore the necessity for hard constraint enforcement mechanisms that provide both feasibility and efficient training.

In this paper, we introduce {\ournet} (\Cref{fig:snarenet-adarel}), a novel framework to enforce nonlinear, input-dependent constraints on outputs of NNs while preserving their approximation capabilities and enabling end-to-end training. Our key contributions are:
\begin{itemize}


    \item \textbf{Adaptive relaxation training paradigm:} We introduce a novel training strategy that progressively tightens constraint satisfaction during training (\Cref{fig:adarel}), improving optimality achieved by the final model while maintaining strict feasibility at the end of training.

    \item \textbf{Feasibility control even for non-convex constraints:} {\ournet} enables users to explicitly control the desired level of constraint satisfaction, allowing for empirical trade-offs between strict feasibility and solution quality.
        {\ournet} is the first repair layer to consistently achieve tight feasibility even on non-convex constraints.

    \item \textbf{Broad applicability and robustness:} We demonstrate {\ournet}'s effectiveness on optimization learning tasks and neural control policies.
    {\ournet} consistently produces feasible solutions with better objective values compared to state-of-the-art baselines.
\end{itemize}

\paragraph{Organization.}
\Cref{sec:constrained_nn} introduces the problem of constrained NN and review the literature.
\Cref{sec:motivation} illustrates a novel perspective on the existing closed-form repair layer for linear constraints and motivates {\ournet}.
\Cref{sec:snarenet} presents our {\ournet} for nonlinear constraints and the adaptive relaxation training paradigm.
\Cref{sec:exp} demonstrates our experiments on {\ournet}.
\Cref{sec:conclusion} concludes with a discussion of future directions.
We interleave related literature review into \Cref{sec:constrained_nn} and \Cref{sec:motivation} since literature were integral to the development of {\ournet}.

\begin{figure}
    \centering
    \begin{subfigure}[b]{0.58\linewidth}
        \centering
        \includegraphics[width=\linewidth]{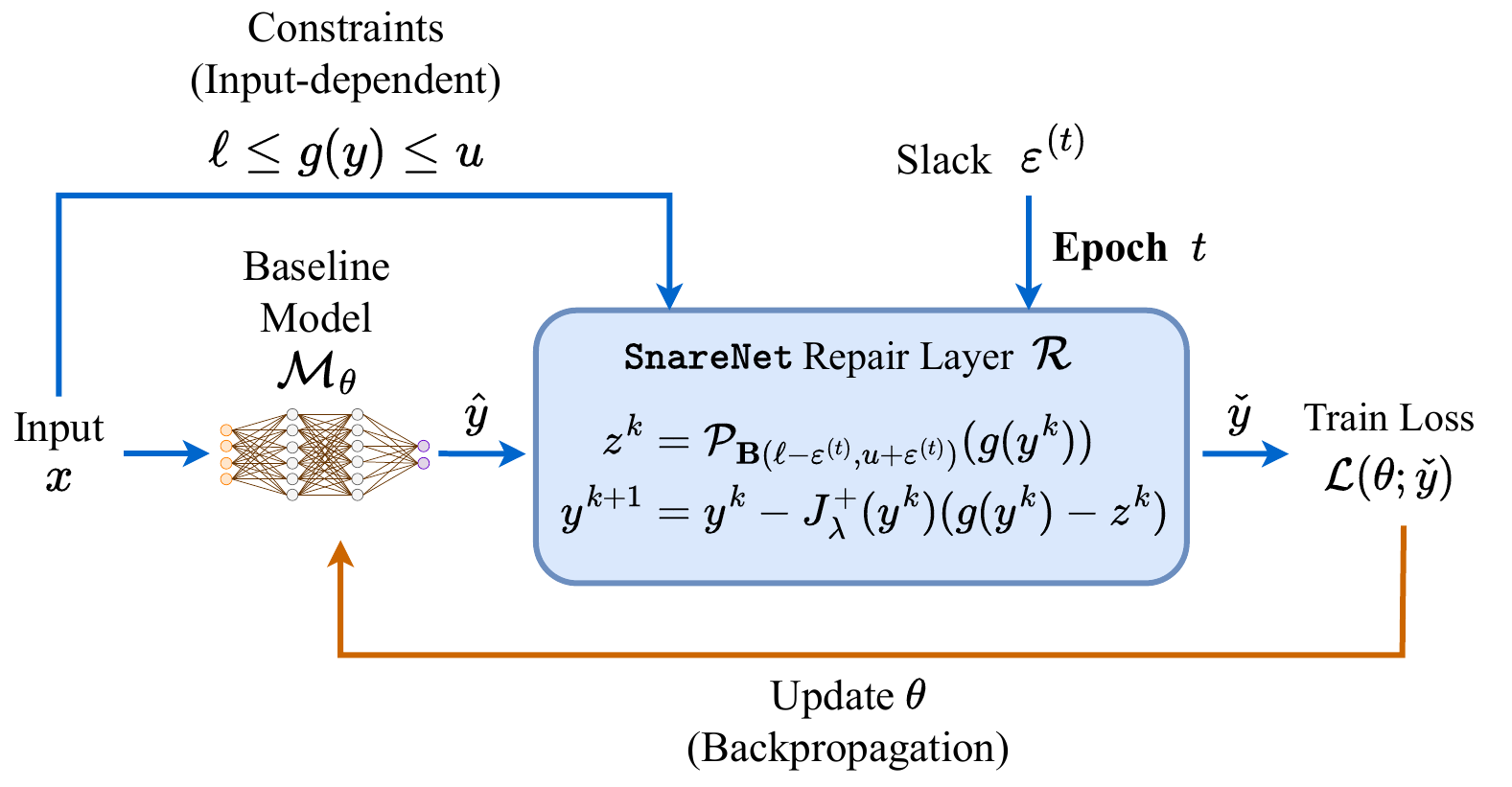}
        \caption{{\ournet} is equipped with an iterative repair layer.}
        \label{fig:snarenet}
    \end{subfigure}
    \hfill
    \begin{subfigure}[b]{0.4\linewidth}
        \centering
        \includegraphics[width=\linewidth]{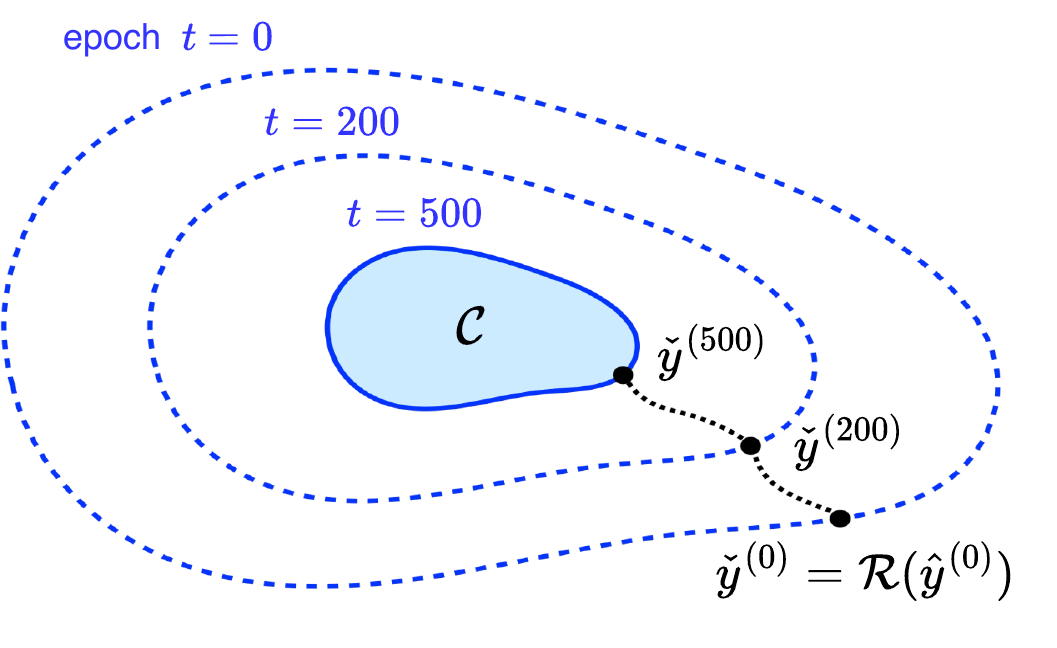}
        \caption{
            Adaptive relaxation training paradigm.
        }
        \label{fig:adarel}
    \end{subfigure}
    \caption{{\ournet}'s architecture design and training paradigm.}
    \label{fig:snarenet-adarel}
\end{figure}

\section{Constrained Neural Networks} \label{sec:constrained_nn}

The fundamental task in deep learning is to learn complex input-output mappings $\Phi: \mcal{X} \rightarrow \mcal{Y}$ from data.
Given input $x \in \mcal{X}$, the deep learning model $\mcal{M}_{\theta}: \mcal{X} \rightarrow \mcal{Y}$ parametrized by weights $\theta$ is expected to produce an output $\hat{y} = \mcal{M}_{\theta}(x)$ that approximates $y = \Phi(x)$.
These deep learning models are typically trained by minimizing an empirical risk over a finite training dataset $\mcal{X}_{\train} \subset \mcal{X}$ defined by $\mathcal{L}(\theta) = \frac{1}{|\mathcal{X}_{\train}|} \sum_{x \in \mathcal{X}_{\train}} \ell(\theta; x)$, where $\ell(\theta; x)$ denotes a suitable loss function measuring the discrepancy between predictions and targets for input $x$ under model parameter $\theta$.
For simplicity, we consider the setting where $\mcal{X} \subset \mbb{R}^d$ and $\mcal{Y} \subset \mbb{R}^n$ admit vector representations.

In this paper, we consider the task to impose input-dependent (\ie $x$-dependent) constraints on the output $y$, which take the form:
\begin{equation} \label{eq:constraints}
\ell_x \leq g_x(y) \leq u_x, \quad (\text{abbrev. }~ \ell \leq g(y) \leq u)
\end{equation}
where $g : \mbb{R}^n \rightarrow \mbb{R}^m$ is differentiable, $\ell, u \in (\mbb{R} \cup \{\pm \infty \})^m$ are lower and upper bounds respectively, and all $g, \ell, u$ are parametrized by input $x$.
We suppress the dependence of $g, \ell, u$ on $x$ throughout the paper for notational simplicity.
The formulation in \eqref{eq:constraints} encompasses equality constraints when $\ell_i = u_i$ for index $i$, as well as one-sided constraints when $\ell_i = -\infty$ or $u_i = \infty$.
Throughout the paper, we assume the feasible set $\mcal{C} := \{ y \in \mcal{Y} \mid \ell \leq g(y) \leq u \}$ is non-empty for any $x \in \mcal{X}$.

\subsection{Soft-Constraint Methods}
To encourage the model $\mcal{M}_{\theta}$ to produce feasible outputs, early approaches penalize constraint violations of the output\footnote{We denote the output by $\hat{y}$ for simplicity when the dependence on $\theta$ and $x$ is clear from the context.} $\hat{y} = \hat{y}(\theta, x) := \mcal{M}_{\theta}(x)$ in the loss function:
\begin{equation} \label{eq:penalty-loss} \begin{aligned}
\ell_{\text{soft}}(\theta; x) = \ell(\theta; x) + \mu_u \|\relu(g(\hat{y}) - u)\|^2 + \mu_{\ell} \|\relu(\ell - g(\hat{y}))\|^2,
\end{aligned}
\end{equation}
where $\mu_u > 0$ and $\mu_{\ell} > 0$ are penalty weights that determine the strength of the penalty, and $\relu(\cdot) := \max(0, \cdot)$ is applied elementwise.
The loss in \eqref{eq:penalty-loss} is also called a \emph{soft loss} and allows the use of standard unconstrained optimization techniques during training.
This soft constraint methodology is commonly used for data-parameterized constrained optimization \cite{van2025optimization} and for solving partial differential equations \cite{raissi2019physics, dener2020trainingneuralnetworksphysical}.
While penalty methods are straightforward to implement and broadly applicable,
models trained with soft loss generally violate constraints on unseen problem instances.
Strict constraint satisfaction requires setting $\mu_u$ and $\mu_{\ell}$ to infinity (or a very large number), which however makes the soft loss infinitely ill-conditioned and hard to solve \cite{rathore2024challenges}.

\subsection{Hard-Constraint Methods}
To impose the constraints \eqref{eq:constraints} on the output, a more sophisticated approach appends a \emph{repair module\footnote{
    We borrow the word ``repair'' from the term \emph{repair layer} in \citet{van2025optimization},
    but we call $\mcal{R}$ a \emph{layer} only if it is trainable.
    The same concept has also been referred to as
    a \emph{projection layer} \cite{min2024hardnet,grontas2025pinet,liang2024homeomorphic} or a \emph{feasibility layer} \cite{li2023learning,nguyen2025fsnet} in the literature when the repair module is trainable.}
} $\mathcal{R}: \mathbb{R}^n \rightarrow \mathbb{R}^n$ that maps any potentially infeasible output $\hat{y}$ to a feasible one $\check{y}$.
We use $\hat{y} = \mcal{M}_{\theta}(x)$, pronounced ``y-hat'', to denote the approximate solution from the baseline model, which is potentially infeasible;
and we use $\check{y} = \mcal{R}(\hat{y})$, pronounced ``y-check'', to denote the solution after applying the repair module, which is enforced to be feasible.
Two paradigms have appeared to couple repair modules to a baseline NN,
post-processing and end-to-end training.

\paragraph{Post-processing for Feasibility.} 
The post-processing approach applies the repair module only during inference, but do not allow differentiable training for model parameters $\theta$.
The simplest post-processing approach is projection onto the feasible set.
Given any test instance $x_{\test}$ and a trained model $\mcal{M}_{\theta}$, the repair module projects the baseline output onto the feasible set, \ie
\begin{equation} \label{eq:opt-proj}
\mathcal{R}(\hat{y}(\theta, x_{\test})) := \argmin_{y \in \mcal{C}} \|y - \mcal{M}_{\theta}(x_{\test}) \|^2.
\end{equation}
When $\mcal{C}$ is convex, problem \eqref{eq:opt-proj} is a convex program that typically can be reliably solved by existing optimization solvers.
When $\mcal{C}$ is non-convex, optimization solvers still reliably find feasible (but not always optimal) solutions to \eqref{eq:opt-proj}.
Alternatively, \citet{liang2024homeomorphic} propose to use an auxiliary deep learning model that learns a homeomorphic mapping from the feasible set to the unit ball.
However, their method needs binary search to project an infeasible solution to the feasible set, which precludes end-to-end training of the combined system.
Post-processing can lead to critical problems as demonstrated in \citet[Appendix C.4]{grontas2025pinet}: models trained without active constraints may diverge on unbounded objectives.
Even when they converge, solution quality may be poor after projection since the baseline model fails to anticipate the repair operation.

\paragraph{Trainable Layers for Feasibility.}
On the other hand, the end-to-end approach activates the repair layer during both training and inference, allowing gradients of model parameters $\theta$ to flow through the repair operation. The baseline model $\mcal{M}_{\theta}$ can therefore adapt to the repair operation.
To enable backpropagation, the repair layer must use only operations that are differentiable almost everywhere.
{\optnet} \cite{amos2017optnet} proposes differentiable convex optimization problem as a layer, for which the gradients are calculated from the KKT conditions in backward pass.
Under convex constraints, the convex projection problem \eqref{eq:opt-proj} can be appended as a repair layer, albeit at the cost of expensive training.
State-of-the-art lightweight trainable repair layers typically employ a fixed number of iterative updates derived from classical optimization algorithms, such as {\dc} \cite{donti2021dc3} and {\pinet} \cite{grontas2025pinet}.
{\dc} predicts an initial solution using a soft penalty formulation and moves it toward the feasible region via equality completion and gradient descent on the distance to feasibility.
{\pinet} decomposes an affine feasible set as the intersection of two convex sets in lifted dimension and applies Douglas-Rachford algorithm, in which the computational efficiency is improved by restricting the framework to linear and box constraints.
While iterative repair layers improve constraint satisfaction, backpropagation through these iterations requires intensive GPU memory.
In contrast, {\hardnet} \cite{min2024hardnet} uses a closed-form repair layer that guarantees exact feasibility up to machine precision and enables efficient backpropagation, but {\hardnet} is limited to linear constraints and often produces suboptimal solutions.

\paragraph{Key Challenges.}
A trainable repair layer must find a feasible point using only differentiable operations.
Differentiability can be achieved by unrolling iterative updates,
but repair layers that use too many iterations to reach feasibility generate large unrolled computational graphs,
resulting in slow training.
Convex optimization layers \cite{amos2017optnet} avoid unrolling, but cannot handle non-convex constraints.
Designing an efficient repair layer for non-convex constraints remains an open challenge.

\section{Novel Preimage Perspective From Linear Constraints} \label{sec:motivation}

We develop a new interpretation of the closed-form repair layer for full row rank linear constraints proposed by {\hardnet} \cite{min2024hardnet}.
Our preimage perspective interprets the inequality feasibility problem as a more tractable equation solving problem.
Unlike the original interpretation from {\hardnet}, our preimage perspective suggests a natural generalization to nonlinear and nonconvex constraints.


\subsection{Preimage Perspective for Closed-Form Linear Repair Layer}
{\hardnet} \cite{min2024hardnet} shows that iterative
repair layers are unnecessary by introducing a closed-form repair layer that strictly enforces all linear constraints
$\ell \leq A y \leq u$ when $A \in \mbb{R}^{m \times n}$ has full row rank:
\begin{align}
    \mcal{R}(\hat{y}) &:= \hat{y} + A^{+} \mbd{\delta}(A \hat{y}; \ell, u), ~\mbox{ where}~
    \mbd{\delta}(z; \ell, u) := \text{ReLU}(\ell - z) - \text{ReLU}(z - u) \label{eq:hardnet-layer}
\end{align}
and $A^{+}$ is the pseudoinverse of $A$.
The \emph{correction vector} $\mbd{\delta}(z; \ell, u)$ corrects $z$ to the box $\mbf{B}(\ell, u) := \{ z \in \mbb{R}^m \mid \ell_i \leq z_i \leq u_i,~ \forall i \in [m] \}$ by elementwise projection since
\begin{equation} \label{eq:box-proj}
    \mcal{P}_{\mbf{B}(\ell, u)} (z)
    := \begin{cases}
    z_i, &\text{ if } \ell_i \leq z_i \leq u_i; \\
    \ell_i, &\text{ if } z_i \leq \ell_i; \\
    u_i, &\text{ if } z_i \geq u_i;
    \end{cases}
    = z + \mbd{\delta}(z; \ell, u).
\end{equation}
Observe that the preimage of $\mbf{B}(\ell, u)$ under the map $A$ is exactly the feasible set $\mcal{C}$:
\begin{align*}
    \mcal{C} = \{y \in \mbb{R}^n \mid \ell \leq A y \leq u \}
    = \{y \in \mbb{R}^n \mid A y \in \mbf{B}(\ell, u) \}
    = \preimage \text{ of } \mbf{B}(\ell, u) \text{ under } A.
\end{align*}
We interpret {\hardnet} through the preimage: {\hardnet} projects the image vector $A \hat{y}$ onto the box $\mbf{B}(\ell, u)$ and chooses a particular feasible $\check{y} = \mcal{R}(\hat{y})$ that lies in the preimage of $\mcal{P}_{\mbf{B}(\ell, u)} (A \hat{y})$.
That is, we can \emph{reformulate the inequality feasibility problem as a tractable equation solving problem}: $A \check{y} = \mcal{P}_{\mbf{B}(\ell, u)} (A \hat{y})$ using box projector; and moreover, the bix projector admits a closed-form expression \eqref{eq:box-proj}.
When $A$ has full row rank, {\hardnet} solves for $\check{y}$ by updating $\hat{y}$ with the vector $A^+\mbd{\delta}(A \hat{y}; \ell, u)$ that is pulled back from the correction $\mbd{\delta}(A \hat{y}; \ell, u)$ in the constraint space by the pseudoinverse $A^+$ (\Cref{fig:procedure-linear}).
Linearity of $A$ ensures $A^+\mbd{\delta}(A \hat{y}; \ell, u)$ serves as a feasibility correction:
\begin{equation*}
    A \mcal{R}(\hat{y})
    = A ( \hat{y}+A^+\mbd{\delta}(A \hat{y}; \ell, u) )
    = A\hat{y} + \mbd{\delta}(A \hat{y}; \ell, u)
    = \mcal{P}_{\mbf{B}(\ell, u)} (A \hat{y}).
\end{equation*}

\begin{figure}
    \centering
    \begin{subfigure}[b]{0.49\linewidth}
        \centering
        \includegraphics[width=\linewidth]{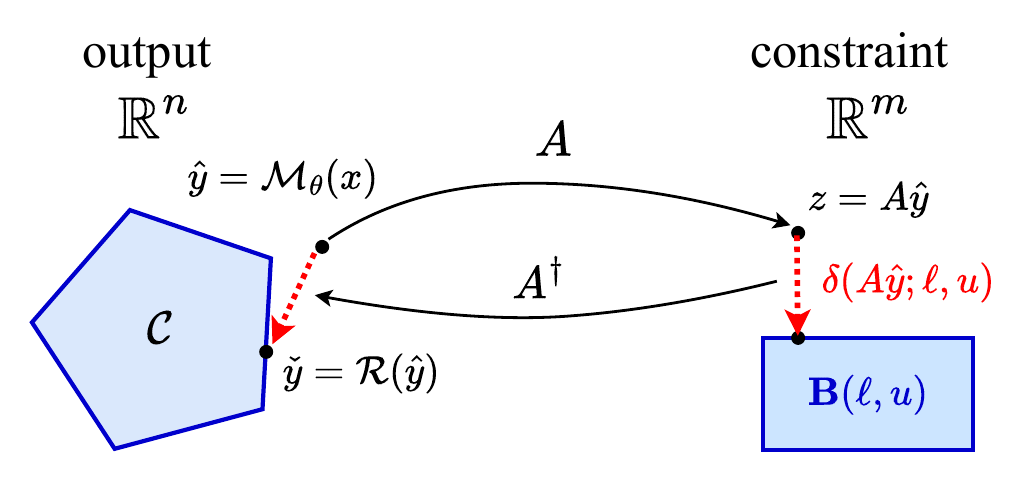}
        \caption{Linear $g(y) = Ay$ with full row rank $A$}
        \label{fig:procedure-linear}
    \end{subfigure}
    \hfill
    \begin{subfigure}[b]{0.49\linewidth}
        \centering
        \includegraphics[width=\linewidth]{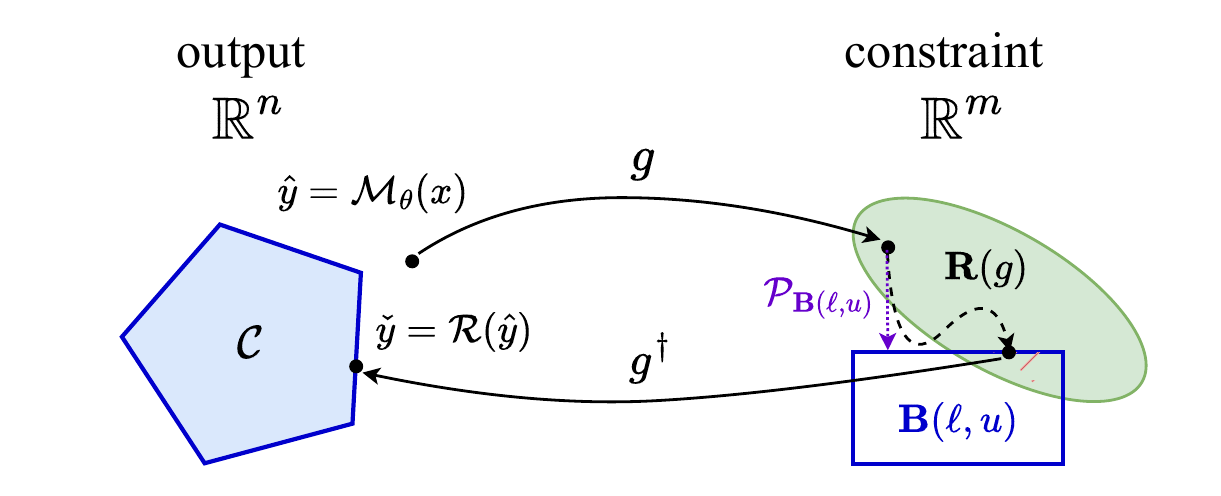}
        \caption{Nonlinear $g(y)$}
        \label{fig:procedure-nonlinear}
    \end{subfigure}
    \caption{
    Preimage perspective to reformulate inequality feasibility problems to tractable equation solving problems.
    {\hardnet} corrects an infeasible prediction $\hat{y}$ by a correction pulled back from the constraint space to the output space (\Cref{fig:procedure-nonlinear}).
    {\ournet} finds a path to approach the intersection $\mathbf{R}(g) \cap \mathbf{B}(\ell, u)$ (\Cref{fig:procedure-nonlinear}).
    }
    \label{fig:procedure}
\end{figure}

\subsection{The Challenge of Non-Linear Constraints}

The above reformulation that uses the box projector to convert an inequality feasibility problem to equations solve does not directly extend to nonlinear constraints. \Cref{ex:disk} illustrates the challenge:

\begin{example}\label{ex:disk}
    Constrain $y \in \mbb{R}^2$ to the intersection of two disks of radius $3/2$, centered at $(\pm 1, 0)$:
    \begin{align*}
        g(y) = \begin{bmatrix}
            g_1(y) \\ g_2(y)
        \end{bmatrix}
        :=
        \begin{bmatrix}
            (y_1 + 1)^2 + y_2^2 \\ (y_1 - 1)^2 + y_2^2
        \end{bmatrix}
        \leq u :=
        \begin{bmatrix}
            9/4 \\ 9/4
        \end{bmatrix}.
    \end{align*}
    Now, the infeasible prediction $\hat{y} = (-1, 0)$ violates the second constraints but satisfies the first one as $g_1(\hat{y}) = 0$ and $g_2(\hat{y}) = 4$.
    The box projection of $g(\hat{y})$ is $\mathcal{P}_{\mathbf{B}(-\infty, u)}(g(\hat{y})) = (0, \frac{9}{4})$. However, there is \emph{no} point $y \in \mathbb{R}^2$ such that $g(y) = (0, \frac{9}{4})$ since the system of equations
    \begin{equation*}
        \begin{cases}
        g_1(y_1, y_2) = (y_1 + 1)^2 + y_2^2 = 0 \\
        g_2(y_1, y_2) = (y_1 - 1)^2 + y_2^2 = \frac{9}{4}
        \end{cases}
    \end{equation*}
    has no solution
    and the preimage of $(0, \frac 9 4)$ under $g$ is empty.
\end{example}
\Cref{ex:disk} reveals the fundamental difficulty: the \emph{joint numerical range} $\mbf{R}(g) := \{g(y) \in \mbb{R}^m \mid y \in \mbb{R}^n \}$ of non-linear $g$ is typically a proper subset of $\mbb{R}^m$.
Hence, the projection of $g(\hat{y})$ onto $\mathbf{B}(\ell, u)$ may have empty preimage under $g$.
To guarantee feasibility, we must move towards the intersection $\mathbf{R}(g) \cap \mathbf{B}(\ell, u)$, which characterizes points that are both (i) in the range of $g$ (ensuring a preimage exists) and (ii) within the feasible box (ensuring constraint satisfaction). See \Cref{fig:procedure-nonlinear}.

\section{{\ournet}: Design and Analysis} \label{sec:snarenet}
This section introduces  {\ournet} (\Cref{fig:snarenet-adarel}), which efficiently finds a feasible point satisfying \eqref{eq:constraints}.

\subsection{Repair Layer: Regularized Adaptive Newton Update}

Given $z \in \mathbf{R}(g) \cap \mathbf{B}(\ell, u)$, Newton's method can find a feasible $\check{y} \in \mcal{C}$ by solving the non-linear system $g(y) = z$.
{\ournet} uses $z^k = \mcal{P}_{\mbf{B}(\ell, u)} ( g(y^k) )$ as the initial RHS at each Newton iteration:
\begin{align}
    y^{k+1} &= \argmin_y \| J_{g}(y^k) (y - y^k) + g(y^k) - z^k \|^2
    = y^k - J_{g}(y^k)^{+} (g(y^k) - z^k), \label{eq:newton-update}
\end{align}
where $J_{g}(y) \in \mbb{R}^{m \times n}$ is the Jacobian of $g$.
When $g = A$ has full row rank, $z^k$ must admit a preimage, Newton's method converges in one iteration, and the update \eqref{eq:newton-update} reduces to \eqref{eq:hardnet-layer} in {\hardnet}.
The linearization in Newton's method \eqref{eq:newton-update} approximates $g$ only locally and thus requires safeguards to ensure convergence.
{\ournet} uses Levenberg–Marquardt (LM) regularization,
replacing \eqref{eq:newton-update} by
\begin{equation} \label{eq:LM-update}
    y^{k+1} = \argmin_y \| J_{g}(y^k) (y - y^k) + g(y^k) - z^k \|^2 + \lambda \| y - y^k \|^2
    = y^k - J^{\dagger}_{\lambda}(y^k) (g(y^k) - z^k),
\end{equation}
where $J^{\dagger}_{\lambda}(y^k) := [J_{g}(y^k)^{\top} J_{g}(y^k) + \lambda I ]^{-1} J_{g}(y^k)^{\top}$.
The regularization term $\lambda \| y - y^k \|^2$ with $\lambda > 0$ ensures the new point is close to $y^k$, at which $g$ is linearized.
{\ournet} terminates when the update magnitude or constraint violation falls below a prescribed tolerance or the iteration limit is reached.

\subsection{Training Paradigm: Adaptive Relaxation}

Strict constraints enforcement during early-stage training can harm performance.
\citet{min2024hardnet} observes that immediate enforcement prevents models from reaching better-optimized final states, and solves the problem by disabling the repair layer and using soft loss \eqref{eq:penalty-loss} in the first few epochs, also called ``soft-epochs''.
\citet{nguyen2025fsnet} show that randomly initialized networks produce large violations that force the repair layer to work overtime and worsen the final solution.

{\ournet} instead uses a new \emph{adaptive relaxation} ({\adarel}) training paradigm that
snares the neural network at initialization and shrinks it into the feasible set
throughout the training process (\Cref{fig:adarel}).
In the beginning of training, we repair the approximate solution $\hat{y}$ towards a relaxed feasible set 
\begin{equation*}
    \mcal{C}_{\varepsilon^{(t)}} := \{y \in \mbb{R}^n \mid \ell - \varepsilon^{(t)} \leq g(y) \leq u + \varepsilon^{(t)} \},
\end{equation*}
parametrized by a slack $\varepsilon^{(t)} \geq 0$ at epoch $t$.
Our experiments use a slack $\varepsilon^{(t)}$ linearly decaying to zero over a preset decay horizon, allowing the model to explore a broader solution space initially while gradually tightening the constraints.
We set $\varepsilon^{(t)} = 0$ for the last few training epochs to ensure exact feasibility.
At epoch $t$, we implement {\adarel} by replacing $z^k$ in \eqref{eq:LM-update} with
\begin{equation} \label{eq:z-adarelx}
    z^k := \mcal{P}_{\mbf{B}(\ell - \varepsilon^{(t)}, u + \varepsilon^{(t)})} ( g(y^k) ).
\end{equation}





\subsection{Analysis through the Lens of Preconditioning} \label{sec:conv}

{\ournet}'s update \eqref{eq:LM-update} is equivalent to \emph{preconditioned gradient descent} (PGD) on constraint violation
\begin{equation*} 
    F(y) = \frac{1}{2} \| g(y) - \mathcal{P}_{\mathbf{B}(\ell, u)}(g(y)) \|^2
\end{equation*}
with preconditioner $P_\lambda(y) := (J_g(y)^\top J_g(y) + \lambda I)^{-1}$ for $\lambda > 0$.
\Cref{prop:F-property} shows that $F(y)$ is differentiable if $g$ is and $\nabla F(y) = J_g(y)^\top (g(y) - \mathcal{P}_{\mathbf{B}(\ell, u)}(g(y)))$.
When $\mcal{C}$ has a convex representation,
\begin{equation*} 
    \mathcal{C} := \{ y \in \mathbb{R}^n \mid h(y) \leq 0,\, \ell_A \leq Ay \leq u_A \}
\end{equation*}
with convex $h : \mathbb{R}^n \rightarrow \mathbb{R}^{m_h}$,
matrix $A \in \mathbb{R}^{m_A \times n}$,
and bounds $\ell_A, u_A \in \mathbb{R}^{m_A}$,
$F$ is convex.
Moreover, if $g(y) = Ay$ and $A$ has full row rank, $F$ satisfies the Polyak--\L{}ojasiewicz (PL) condition in the preconditioned norm $\| \cdot \|_{P_\lambda(y)}$ (see \Cref{lem:F-PL}).
The convergence guarantees of {\ournet} are grounded in PGD update and these properties of $F$ such as convexity and Polyak-Lojasiewicz (PL) condition.
\Cref{thm:linear-main} demonstrates our preconditioner $P_\lambda(y)$ enables a fast linear convergence rate for full row rank linear constraints.
\Cref{thm:nonlinear-main} justifies asymptotic feasibility under convex constraints.
{\ournet} minimizes constraint violation by PGD for non-convex constraints.

\begin{theorem}[Linear Convergence for Full Row Rank Linear Constraints]
\label{thm:linear-main}
Suppose $\mathcal{C} := \{ y \in \mathbb{R}^n \mid \ell
\leq A y \leq u \}$ is non-empty and $A$ has full row rank with minimum singular value $\sigma_{\min} > 0$. Then {\ournet} with $\lambda > 0$ converges linearly to a feasible point at rate 
\begin{equation*}
    F (y^k) \leq \left( 1 - \frac{\sigma^2_{\min}}{\sigma_{\min}^2 + \lambda}
    \right)^k F (y^0) .
\end{equation*}
\end{theorem}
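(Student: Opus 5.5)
The plan is to bypass the general preconditioned-gradient machinery and work directly in the image space $z = Ay$. There, the LM update \eqref{eq:LM-update} becomes an explicit linear contraction of the residual. For $g(y) = Ay$ we have $J_g \equiv A$. Write $z^k := Ay^k$, $p^k := \mcal{P}_{\mbf{B}(\ell,u)}(z^k)$ and $r^k := z^k - p^k$, so that $F(y^k) = \tfrac12\|r^k\|^2$. The update then reads
\begin{equation*}
y^{k+1} = y^k - (A^\top A + \lambda I)^{-1} A^\top r^k .
\end{equation*}

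\textbf{Step 1 (push-through identity).} The first step is to use $A(A^\top A+\lambda I)^{-1}A^\top = AA^\top(AA^\top+\lambda I)^{-1} =: M$. This follows from the identity $(A^\top A+\lambda I)^{-1}A^\top = A^\top(AA^\top+\lambda I)^{-1}$, which one checks by multiplying both sides through. Since $A$ has full row rank, $AA^\top$ is symmetric positive definite with eigenvalues $\sigma_i^2 \ge \sigma_{\min}^2>0$. Hence $M$ is symmetric with eigenvalues $\sigma_i^2/(\sigma_i^2+\lambda)$, and $I-M$ is symmetric with eigenvalues $\lambda/(\sigma_i^2+\lambda)$. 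All of these lie in $(0,\, \lambda/(\sigma_{\min}^2+\lambda)]$, so
\begin{equation*}
\|I-M\|_2 \le 1 - \frac{\sigma_{\min}^2}{\sigma_{\min}^2+\lambda} =: \rho .
\end{equation*}
It follows that $z^{k+1} = z^k - M r^k$.

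\textbf{Step 2 (one-step contraction).} Because $p^k \in \mbf{B}(\ell,u)$ and the box projection is the nearest point in the box,
\begin{equation*}
2F(y^{k+1}) = \operatorname{dist}(z^{k+1},\mbf{B}(\ell,u))^2 \le \|z^{k+1}-p^k\|^2 = \|r^k - M r^k\|^2 = \|(I-M)r^k\|^2 \le \rho^2\, \|r^k\|^2 .
\end{equation*}
This gives $F(y^{k+1}) \le \rho^2 F(y^k) \le \rho F(y^k)$, since $\rho\in(0,1)$. Induction then yields the stated bound, in fact with the sharper factor $\rho^{2k}$.

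As an alternative route consistent with the paper's framing, one could invoke the PL property in the $P_\lambda$-norm from \Cref{lem:F-PL} together with a descent lemma for PGD. I would, however, use the direct image-space argument above, since it avoids needing smoothness constants for $F$ (which has only a Lipschitz gradient via the box projection).

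\textbf{Step 3 (convergence of iterates to a feasible point).} The step length satisfies $\|y^{k+1}-y^k\| \le \|A^\top(AA^\top+\lambda I)^{-1}\|\,\|r^k\| \le C\rho^k\|r^0\|$. These lengths are summable, so $(y^k)$ is Cauchy and converges to some $y^\ast$. Continuity of $F$ (the box projection is continuous) then gives $F(y^\ast)=\lim_k F(y^k)=0$, so $Ay^\ast\in\mbf{B}(\ell,u)$, that is, $y^\ast\in\mcal{C}$. Nonemptiness of $\mcal{C}$ is automatic here because $A$ is surjective; the hypothesis is simply consistent with this.

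The only real obstacle is Step 1's spectral bookkeeping, that is, ensuring the contraction constant is governed by $\sigma_{\min}$ rather than $\sigma_{\max}$. This is exactly what the eigenvalue form $\lambda/(\sigma_i^2+\lambda)$, maximized at the smallest singular value, delivers.
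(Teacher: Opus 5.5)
Your proof is correct, and it takes a different route from the paper's. The paper stays inside the preconditioned-gradient framework. It inserts the LM step into the quadratic upper bound of \Cref{lem:F-upper-bd}, which comes from $1$-smoothness of $R$. It then relaxes $A^\top A\preceq A^\top A+\lambda I$ to get the generic descent inequality $F(y^{k+1})\le F(y^k)-\tfrac12\|\nabla F(y^k)\|^2_{(A^\top A+\lambda I)^{-1}}$, and finally applies the preconditioned PL inequality of \Cref{lem:F-PL}.

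Your comparison $\operatorname{dist}(z^{k+1},\mbf{B}(\ell,u))\le\|z^{k+1}-p^k\|$ is in fact the same $1$-smoothness bound: for $R$ at $z^k$ it reads $R(z')\le\tfrac12\|z'-p^k\|^2$. The difference is that you evaluate it exactly as $\tfrac12\|(I-M)r^k\|^2$. The paper's relaxation amounts to replacing $(I-M)^2$ by $I-M$, which is why it gets the factor $\rho$ where you get $\rho^2$. Your factor is tight: take $A=1$ and $\ell=u=0$, so that $y^{k+1}=\rho y^k$ exactly.

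Your route is therefore more elementary, needs no PL lemma, and gives the sharper rate. Your Step 3 also proves that the iterates themselves converge to a point of $\mcal{C}$. The theorem asserts this, but the paper's proof only establishes $F(y^k)\to 0$.

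The paper's route buys modularity. The same descent inequality in the $(J^\top J+\lambda I)^{-1}$-norm is reused in \Cref{thm:nonlinear-main}, where convexity takes the place of PL. There the exact image-space recursion $z^{k+1}=z^k-Mr^k$ is no longer available.

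A minor point: your aside that nonemptiness of $\mcal{C}$ is automatic also needs the box to be nonempty, i.e.\ $\ell\le u$. This does not affect the argument.
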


\Cref{thm:linear-main} suggests that a smaller $\lambda$ leads to faster convergence and $\lambda = 0$ matches one step convergence as in {\hardnet}.
However, our empirical results (\Cref{fig:noncvx-lambda}) show that a small $\lambda$ can lead to feasible solutions with worse optimality.

\begin{theorem}[Convergence for Convex Constraints]
\label{thm:nonlinear-main}
Suppose $\mathcal{C} := \{ y \in \mathbb{R}^n \mid h(y) \leq 0,\, \ell_A \leq Ay \leq u_A \}$ is non-empty,
where $h : \mathbb{R}^n \rightarrow \mathbb{R}^{m_h}$ is a twice differentiable convex function with $L$-Lipschitz continuous Jacobian,
$A \in \mathbb{R}^{m_A \times n}$,
and $\ell_A, u_A \in \mathbb{R}^{m_A}$.
Let $y^0 \in \mathbb{R}^n$ be the initial point, $\mathcal{D} := \{ y \in \mathbb{R}^n \mid F (y) \leq F (y^0) \}$ be the sublevel set, and
$D := \sup_{y \in \mathcal{D}} \inf_{y^{\ast} \in \mathcal{C}} \| y
- y^{\ast} \|$.
Suppose $\sigma^2_{\max} (J_h (y)) + \sigma^2_{\max} (A) \leq \sigma^2$ for all
$y \in \mathcal{D}$ and $\lambda > 0$ satisfies $\lambda^2 - \sqrt{2 F (y^0)} L
\lambda + \tfrac{L^2 F (y^0)^2}{16} \geq 0$. Then {\ournet} converges to
a feasible point at rate 
\begin{equation*}
F (y^k) \leq \frac{F (y^0)}{1 + \frac{F (y^0)}{2 D^2 (\sigma^2 + \lambda)}k}.
\end{equation*}
\end{theorem}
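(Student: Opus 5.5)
The plan is to read the {\ournet} update \eqref{eq:LM-update} as preconditioned gradient descent on $F$ and run the standard convex sublinear-rate argument, $1/F(y^{k+1}) \ge 1/F(y^k) + c$. Write $r(y) := g(y) - \mathcal{P}_{\mathbf{B}}(g(y))$, where $g = (h, A)$ and the bounds for $h$ are $(-\infty, 0]$. By \Cref{prop:F-property}, $\nabla F(y) = J_g(y)^\top r(y)$, so $y^{k+1} = y^k - P_\lambda(y^k)\nabla F(y^k)$. Convexity of $F$ follows from the representation stated before the theorem: $\mathrm{ReLU}(h_i)^2$ is convex because $h_i$ is convex and $t \mapsto \mathrm{ReLU}(t)^2$ is convex nondecreasing, and the squared distance of $Ay$ to a box is convex.

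\textbf{Step 1: sufficient decrease, $F(y^{k+1}) \le F(y^k) - \tfrac{c}{2}\,\nabla F(y^k)^\top P_\lambda \nabla F(y^k)$ for some constant such as $c = 1$.} I would not use a global Lipschitz constant for $\nabla F$. Instead I would use the Gauss--Newton structure. Since the map $w \mapsto \tfrac12\|w - \mathcal{P}_{\mathbf{B}}(w)\|^2$ is convex with $1$-Lipschitz gradient, we have
\[
F(y^+) \le \tfrac12\|r\|^2 + r^\top\bigl(g(y^+) - g(y)\bigr) + \tfrac12\|g(y^+) - g(y)\|^2 .
\]
Next, write $g(y^+) - g(y) = J d + e$ with $d = -P_\lambda J^\top r$ and $\|e\| \le \tfrac{L}{2}\|d\|^2$; only $h$ contributes to $e$, since the $A$ block is linear. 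The linearized model $\tfrac12\|r + Jd\|^2$ is exactly what LM minimizes with the damping term, and it gives decrease $\tfrac12 r^\top J P_\lambda J^\top r + \tfrac{\lambda}{2}\|d\|^2$. The remainder terms are then bounded using $\|d\| \le \|J^\top r\|/\lambda$ and $\|J^\top r\|$ controlled by $\sqrt{2F}$, with $\|P_\lambda J^\top\| \le 1/(2\sqrt\lambda)$. This is where the hypothesis $\lambda^2 - \sqrt{2F(y^0)}L\lambda + L^2F(y^0)^2/16 \ge 0$ enters. I expect it to be exactly the condition that the cubic/quartic error terms in $\|d\|$ are absorbed by $\tfrac{\lambda}{2}\|d\|^2$, so it plays the role of a local curvature bound. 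Monotone decrease then keeps the iterates in $\mathcal{D}$, so the bound $F \le F(y^0)$ can be used at every iteration by induction.

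\textbf{Step 2: lower-bound the decrease by $F^2$.} By convexity, for the projection $y^\ast$ of $y^k$ onto $\mathcal{C}$ we have $F(y^\ast) = 0$, so $F(y^k) \le \nabla F(y^k)^\top (y^k - y^\ast) \le D\|\nabla F(y^k)\|$. Since $\sigma_{\max}(J_g)^2 \le \sigma^2$ on $\mathcal{D}$, we get $P_\lambda \succeq (\sigma^2+\lambda)^{-1} I$, and hence $\nabla F^\top P_\lambda \nabla F \ge F(y^k)^2/\bigl(D^2(\sigma^2+\lambda)\bigr)$. Combining with Step 1 gives
\[
F_{k+1} \le F_k - \frac{F_k^2}{2D^2(\sigma^2+\lambda)} .
\]

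\textbf{Step 3: conclude.} Divide by $F_kF_{k+1}$ and use $F_{k+1} \le F_k$ to get $1/F_{k+1} \ge 1/F_k + 1/\bigl(2D^2(\sigma^2+\lambda)\bigr)$. Telescoping this inequality gives the stated rate. Feasibility of limit points then follows from $F \to 0$ together with continuity of $F$.

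\textbf{Main obstacle.} The hard part is Step 1, namely making the constants match the quadratic condition on $\lambda$ exactly. If the direct expansion fails to produce it, I would fall back on the descent-lemma route with a local smoothness constant of $F$ on $\mathcal{D}$. On $\mathcal{D}$ the Hessian contribution is bounded by $\sigma^2 + L\|r\| \le \sigma^2 + L\sqrt{2F(y^0)}$, and I would then reverse-engineer the condition from the requirement that the step with preconditioner $P_\lambda$ is short enough.
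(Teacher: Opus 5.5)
Your plan is essentially the paper's proof. The shared steps are: $1$-smoothness of $w\mapsto\tfrac12\|w-\mathcal{P}_{\mathbf{B}}(w)\|^2$ plus the linearization error $\tfrac{L}{2}\|d\|^2$ (your Gauss--Newton-model bookkeeping is \Cref{lem:F-upper-bd} rearranged); the bounds $\|r\|\le M:=\sqrt{2F(y^0)}$ and $\|d\|^2\le M^2/(4\lambda)$ from $\|P_\lambda J_g^\top\|\le 1/(2\sqrt{\lambda})$, used inductively on $y^k\in\mathcal{D}$ to absorb the remainder into $\tfrac{\lambda}{2}\|d\|^2$; convexity with $\|J_g^\top J_g+\lambda I\|\le\sigma^2+\lambda$ to reach $F(y^{k+1})\le F(y^k)-F(y^k)^2/(2D^2(\sigma^2+\lambda))$; and telescoping. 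Use the $1/(2\sqrt{\lambda})$ bound rather than $\|d\|\le\|J_g^\top r\|/\lambda$, which would bring $\sigma$ into the $\lambda$-condition.

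On your ``main obstacle'': absorption can only produce a condition of the form $\lambda^2-c_1LM\lambda-c_2L^2M^2\ge 0$ with $c_1,c_2>0$. The paper's own proof in fact invokes $\lambda^2-LM\lambda-L^2M^2/32\ge0$, though absorbing its remainder into $\tfrac{\lambda}{2}\|d\|^2$ really needs $\lambda^2-2LM\lambda-L^2M^2/16\ge0$. The stated condition with $+\tfrac{L^2F(y^0)^2}{16}$ holds for every sufficiently small $\lambda>0$, and it appears to be a typo under which the claim is false. For instance, take $h(y)=100(\|y\|^2-1)$ with linear block $0.9\le y_1\le 2$, so $L=200$, and $y^0=(0,0.9)$, $\lambda=1$. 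The stated condition holds, yet the step gives $y^1=(0.45,0.9)$ with $F(y^1)\approx 0.88>F(y^0)=0.405$. So prove your Step 1 under the corrected condition and drop the fallback rather than reverse-engineering the stated one.
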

Convex constraints requires sufficiently large $\lambda$ for convergence while too large $\lambda$ can lead to slow convergence and increase the computational burden of repair layer.

\subsection{Computational Remarks} \label{sec:compute_rmks}

The primary limitation of {\ournet},
shared by all extant repair layers in the literature, is the computational cost of repair:
applying $J^{\dagger}_{\lambda}(y^k)$ with a direct (factor-solve) method costs $\mathcal{O}(nm^2 + m^3)$ (for $m<n$)
using the push-through identity $[J^{\top} J + \lambda I]^{-1} J^{\top} = J^{\top} [J J^{\top} + \lambda I]^{-1}$.
However, compared to competitors, {\ournet} strikes a balance between efficient repair and computational overhead.
Cheaper methods like {\dc} fail to enforce constraints consistently across instances,
while existing robust methods like {\optnet} require significantly longer training time and work only on convex constraints.
Our experiments show that {\ournet} consistently ensures a medium-to-high feasibility tolerance, even for \emph{non-convex} constraints, while training faster.
{\ournet} is more efficient on less constrained problems (\ie $m \ll n$) as it uses fewer repair iterations (see \Cref{app:scaling}). 

As an alternative to direct inversion, iterative methods like conjugate gradient can find an approximate solution for high-dimensional problems using cheaper iterations, but often require more iterations to converge and may even diverge.
These methods do not form the Jacobian, but require only vector-Jacobian products (VJP), computed efficiently using automatic differentiation.
Empirically, we observe divergence of {\ournet} with iterative methods, so our experiments use a direct solver.

\section{Experiments} \label{sec:exp}
We demonstrate the effectiveness of {\ournet} on
optimization learning and neural control policies\footnote{Code is available at \url{https://github.com/miniyachi/SnareNet}}.
Experiments run on a server with two 64-core AMD EPYC 7763 @ 2.45 GHz, 1 TB RAM, and NVIDIA A6000 GPUs unless otherwise specified.

\subsection{Optimization Learning} \label{sec:opt-learning}

\emph{Optimization learning} \cite{van2025optimization} task seeks a fast surrogate neural solver for a family of optimization problems parametrized by
input $x \in \mcal{X}$:
\begin{equation} \label{eq:opt-prob}
\min_{y \in \mathbb{R}^n} f_x(y) ~\st~  \ell_x \leq g_x(y) \leq u_x,
\end{equation}
Problems of the form \eqref{eq:opt-prob} can be non-linear and non-convex, and can be slow to solve with traditional optimization solvers.
The goal of optimization learning is to learn a model $\mcal{M}_{\theta}$ that approximates the solution map $\Phi: \mcal{X} \rightarrow \mbb{R}^n$, which maps the instance parameter $x$ to optimal solution $y^{\star} = \Phi(x)$ of \eqref{eq:opt-prob}.
In particular, $\mcal{M}_{\theta}$ must produce a feasible solution for all $x \in \mcal{X}$.

\paragraph{Families of Optimization Problems.}
We consider three families: non-convex linearly-constrained programs (NCLPs) and convex/non-convex quadratically constrained quadratic programs (QCQPs).
\begin{equation*}
    \begin{array}{ccl}
    \textbf{(NCLP)} &\displaystyle\min_{y \in \mathbb{R}^n} & \frac{1}{2}y^TQy+p^T \sin(y)\\
    &\text{s.t.} & Ay \leq b, ~Cy = x, \\
    & &
    \end{array}
    \hfill
    \begin{array}{ccl}
    \textbf{(QCQP)} &\displaystyle\min_{y \in \mathbb{R}^n} & \frac{1}{2}y^TQy+p^Ty\\
        & \text{s.t.} & y^TH_iy + g_i^Ty \leq h_i, ~ i = 1, \ldots, m_{\text{ineq}}\\
        & & Cy = x,
    \end{array}
\end{equation*}
where $x \in \mbb{R}^{m_{\text{eq}}}$ is the input and the other problem data are constant within the family.
Each family consists of $10000$ problem instances and is split into train/valid/test set in the ratio 8:1:1.


\paragraph{Baselines.}
We compare {\ournet} to four baselines: {\optnet} \cite{amos2017optnet}, {\dc} \cite{donti2021dc3}, and {\hardnet} \cite{min2024hardnet} achieve feasibility using end-to-end trainable repair layers; and {\hproj} \cite{liang2024homeomorphic} uses post-processing with a learning-based projection module.
Further details of each baseline appear in \Cref{app:baselines}.




\paragraph{Evaluation.} \Cref{tab:eval-metrics} summarizes the six evaluation metrics in our experiments. Each subpart in \Cref{fig:test-bar} follows the same 2-by-3 layout.
All experiments are run for 5 random seeds, over which the metrics are averaged.
The optimality gap is assessed relative to SciPy's SLSQP solver \cite{2020SciPy-NMeth} on NCLPs and Gurobi \cite{gurobi} on convex QCQPs.
We evaluate optimality by objective values on non-convex QCQPs due to long traditional solver time.
The complete test results are provided in \Cref{app:tables}.

\begin{table}[h]
\centering
\resizebox{0.9\linewidth}{!}{%
\begin{tabular}{c|c|c|c}
 & \bf{Optimality Gap / Objective} & \bf{Inequality Violation} & \bf{Equality Violation} \\ \hline
\rule{0pt}{4ex}\bf{Geometric Mean} & $\displaystyle\gmean_{x \in \mathcal{X}_{\test}} (f_x(\hat{y}) - f_x^\star)$ / $\displaystyle\mean_{x \in \mathcal{X}_{\test}} f_x(\hat{y})$ & \multicolumn{2}{c}{$\displaystyle\gmean_{x \in \mathcal{X}_{\test}} \gmean_{j \in [n_{\ineq}]} (j\text{-th inequality/equality violation})$} \\
\hline
\rule{0pt}{4ex}\bf{Maximum} & $\displaystyle\max_{x \in \mathcal{X}_{\test}} (f_x(\hat{y}) - f_x^\star)$ / $\displaystyle\max_{x \in \mathcal{X}_{\test}} f_x(\hat{y})$ &
\multicolumn{2}{c}{$\displaystyle\max_{x \in \mathcal{X}_{\test}} \max_{j \in [n_{\ineq}]} (j\text{-th inequality/equality violation})$} \\
\end{tabular}%
}\vspace*{0.5em}
\caption{Six evaluation metrics where $\gmean(\cdot)$ denotes the geometric mean.}
\label{tab:eval-metrics}
\end{table}


\begin{figure*}
    \centering
    \includegraphics[width=0.495\linewidth]{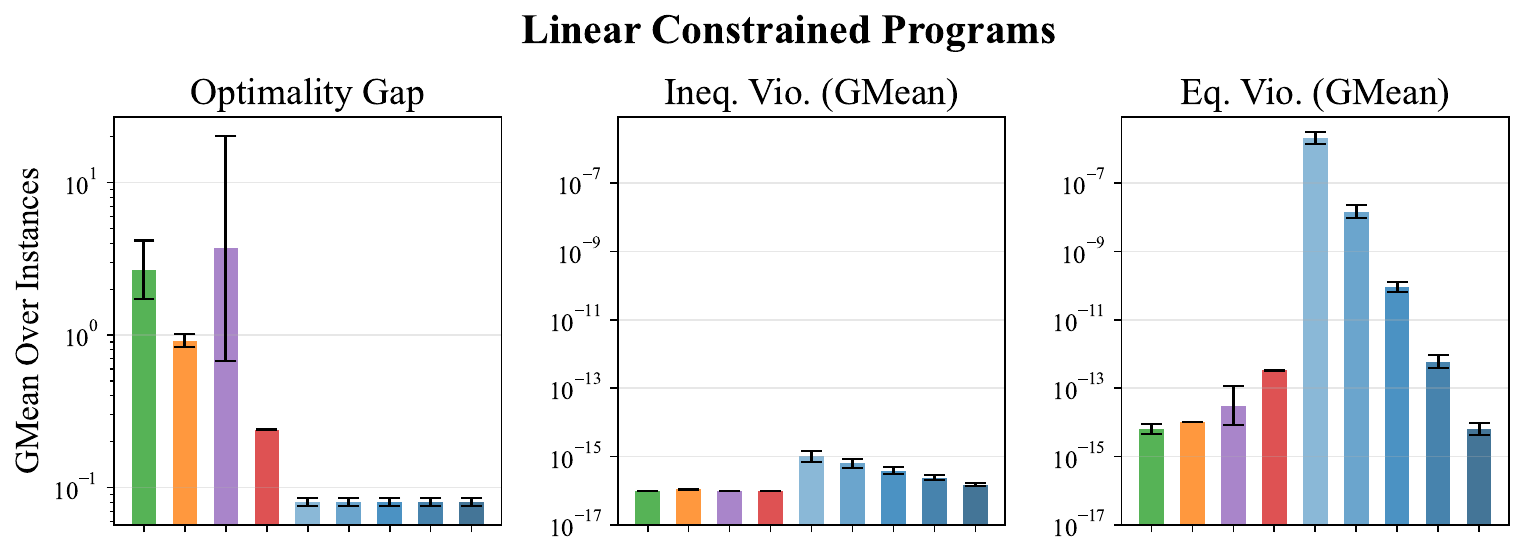}
    \includegraphics[width=0.495\linewidth]{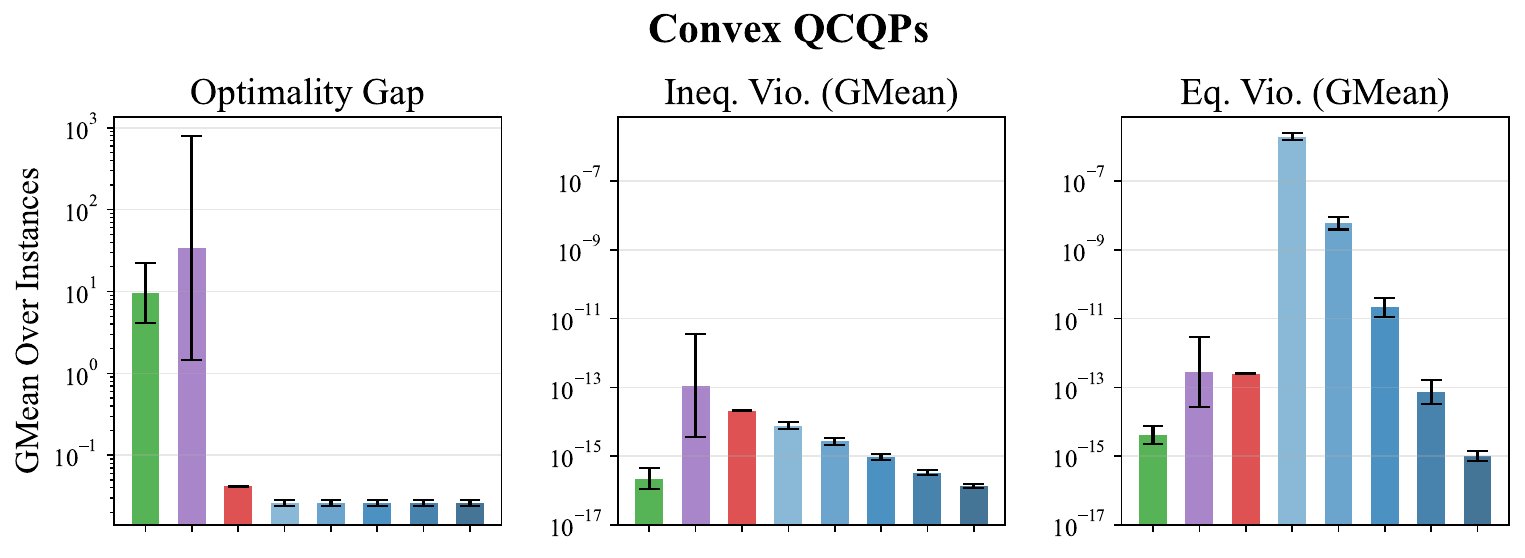}
    \includegraphics[width=0.495\linewidth]{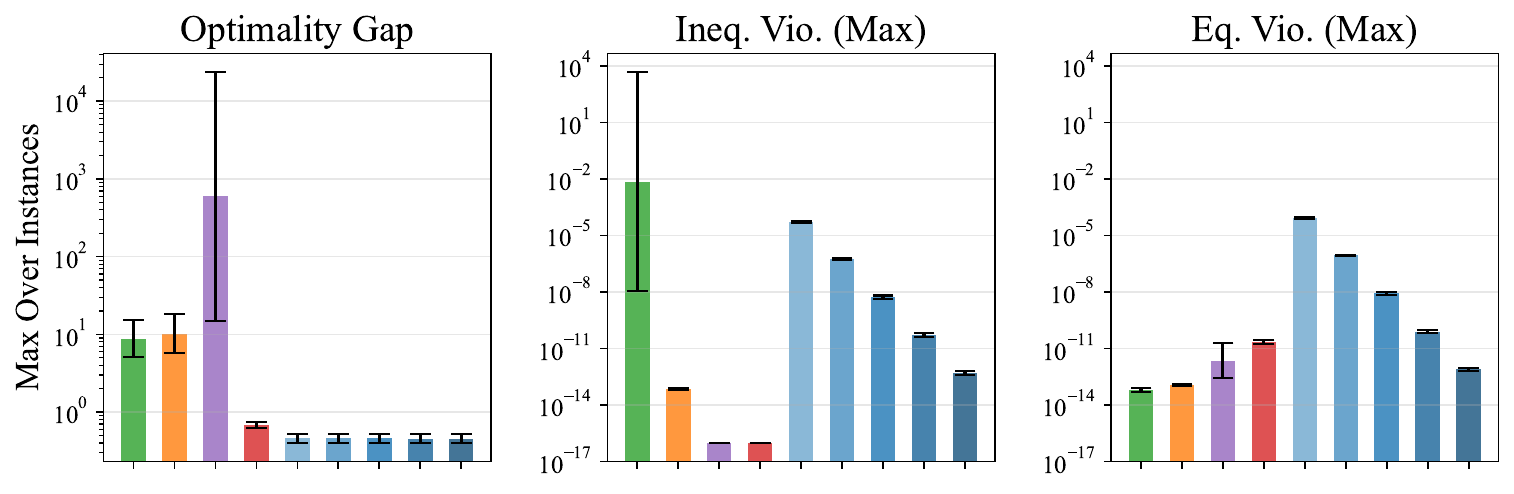}
    \includegraphics[width=0.495\linewidth]{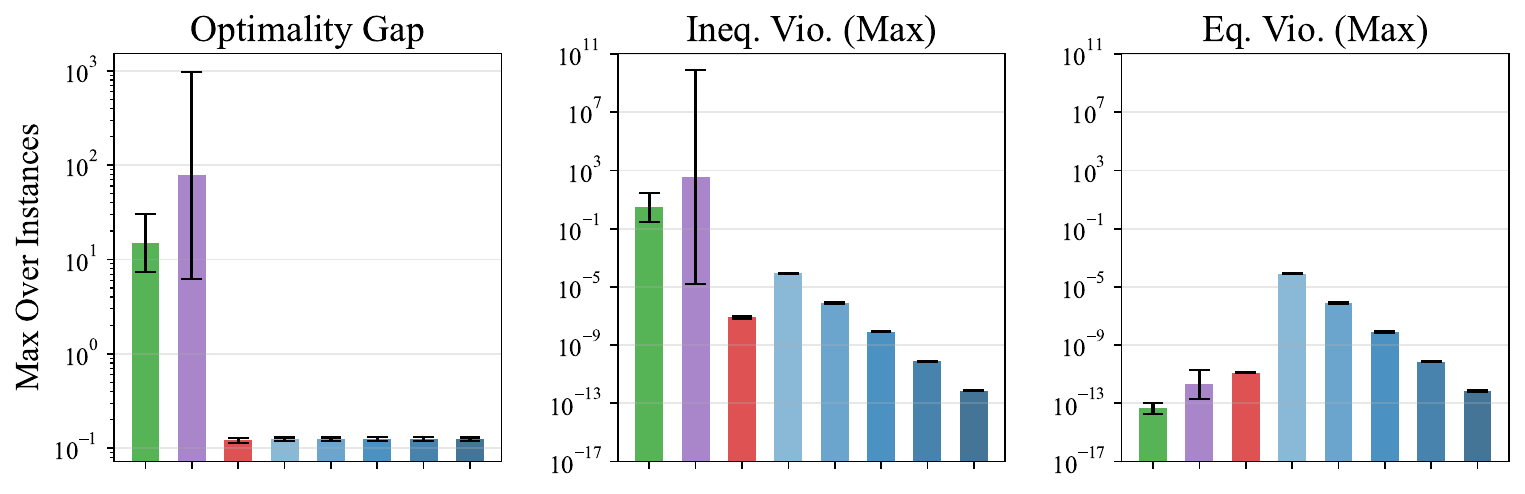}
    \begin{minipage}[c]{0.495\linewidth}
        \includegraphics[width=\linewidth]{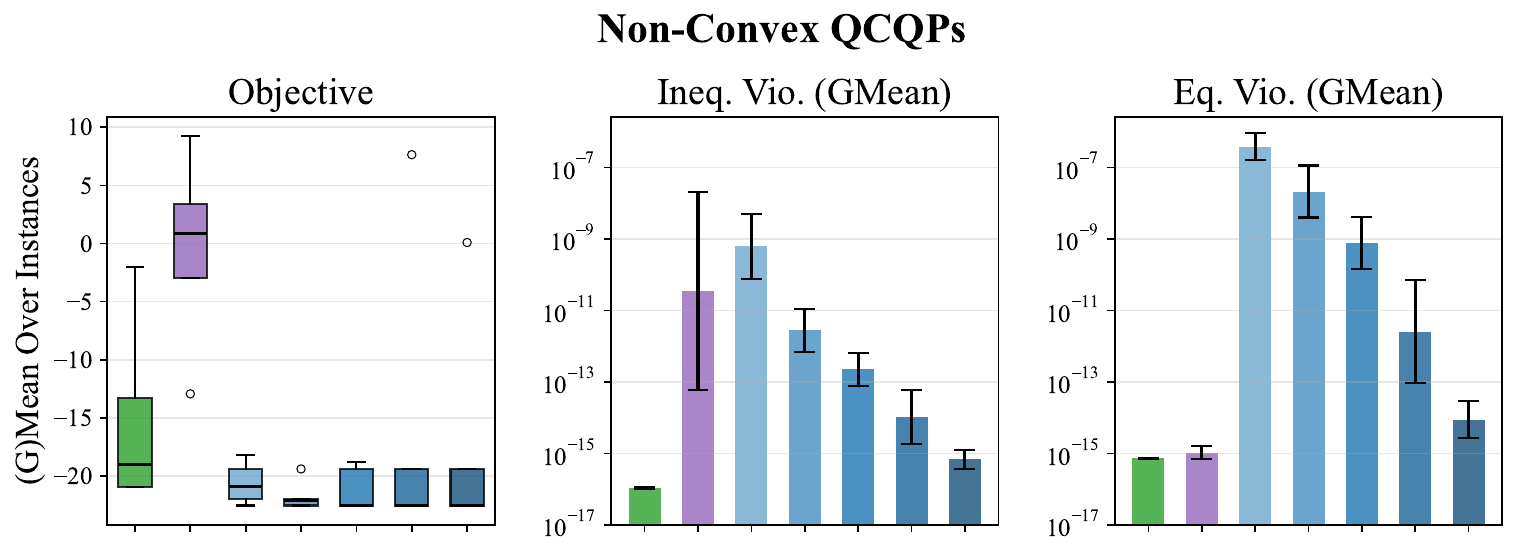}\\
        \includegraphics[width=\linewidth]{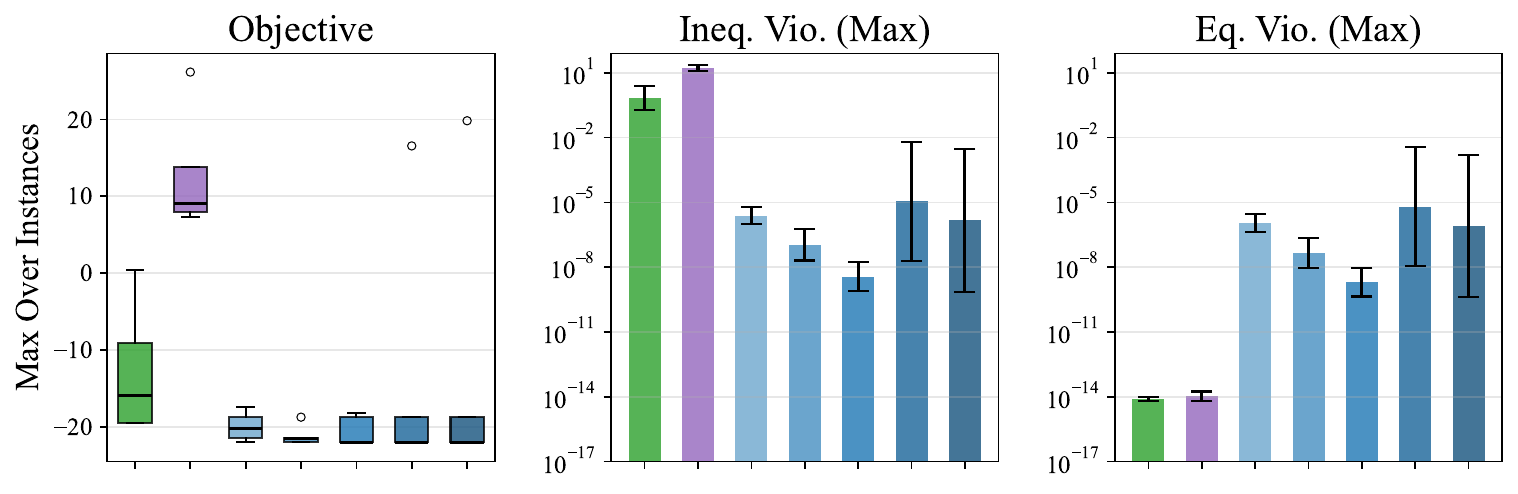}
    \end{minipage}%
    \begin{minipage}[c]{0.495\linewidth}
        \centering
        \includegraphics[width=0.92\linewidth]{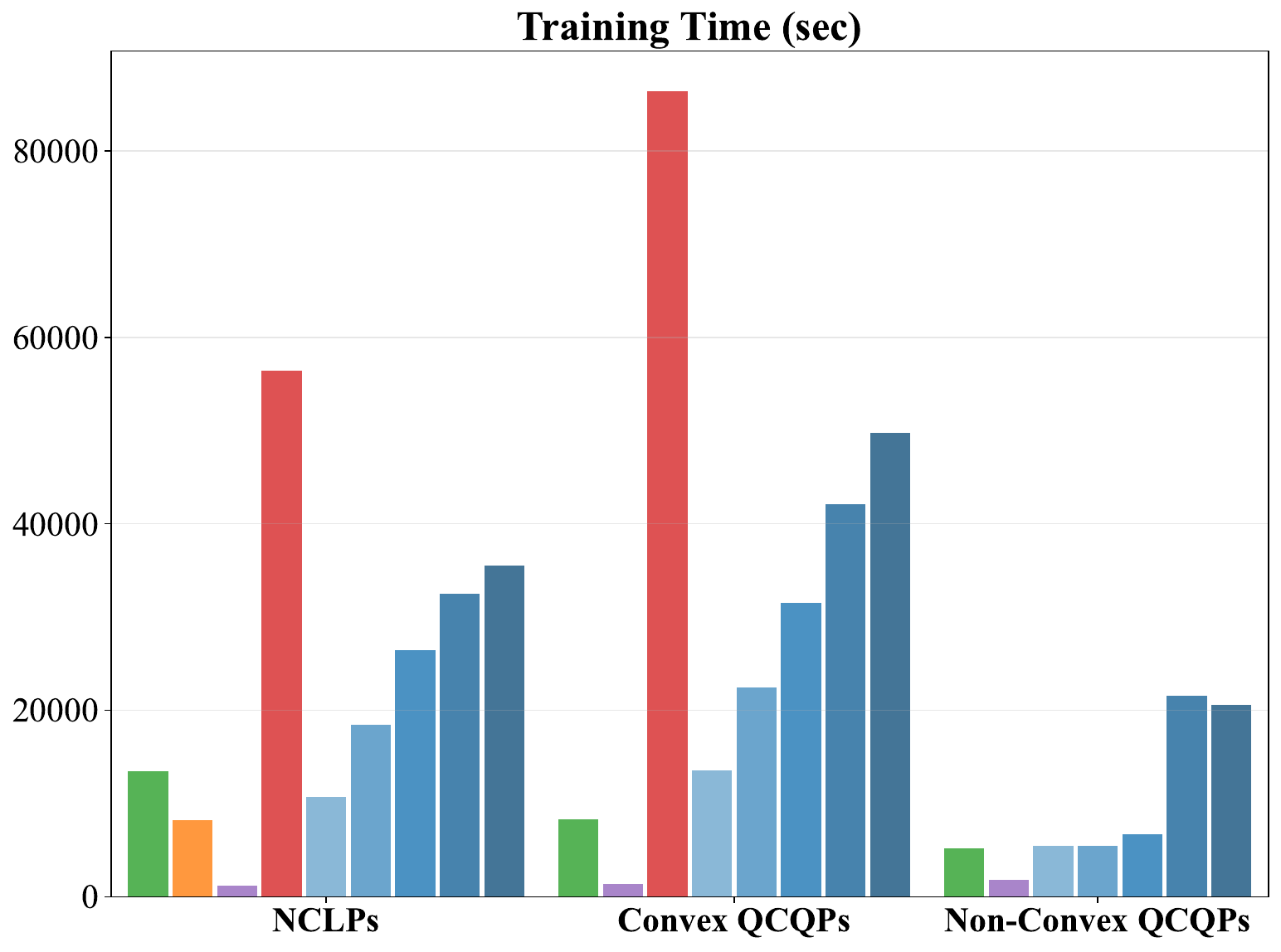}
    \end{minipage}
    \includegraphics[width=0.9\linewidth]{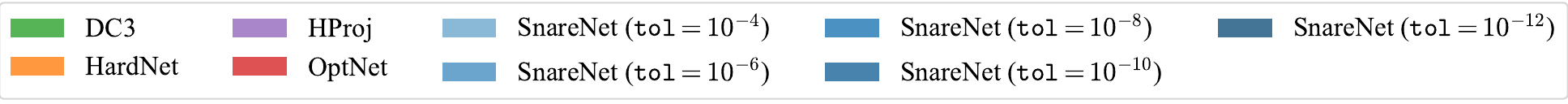}
    \caption{Evaluation metrics on 833 test instances of NCLPs and Convex/Non-Convex QCQPs and the average training time. Black error bars indicate the standard deviation across $5$ random seeds.}
    \label{fig:test-bar}
\end{figure*}

\paragraph{Feasibility Control and Low Variation.}
\Cref{fig:test-bar} evaluates trained neural solvers on a separate test set for all three families\footnote{Each run for NCLPs/convex QCQPs/non-convex QCQPs was terminated after 1500/1500/2000 epochs or 16/24/24 hours.}.
{\dc} and {\hproj} show larger seed-to-seed variability in inequality-constraint violations and optimality gap
and may fail to ensure feasibility across all test instances, especially for QCQPs.
{\hardnet} exhibits larger variability in maximum optimality gap.
Our {\ournet} shows less variability and successfully controls feasibility up to the specified tolerance.

\paragraph{Better Optimality.}
\Cref{fig:test-bar} also shows that {\ournet} achieves optimality gaps at least one order of magnitude smaller than all baseline methods except for {\optnet} while maintaining feasibility within the prescribed feasibility tolerance $\mathtt{tol}$.
However, {\optnet} does not apply to non-convex constraints and requires much longer training to achieve similar performance on convex constraints.
{\hproj} exhibits substantial variation in optimality gap across random seeds\footnote{The results of {\hproj} on non-convex QCQPs are averaged over 3 of 5 random seeds (2 were excluded due to numerical instability and outlier objective values of $\approx 5000$, respectively.}because its projection step is applied post-training, which can degrade optimality.

\paragraph{$\lambda$ Improves Optimality Even for Linear Constraints.}
{\ournet} specializes to {\hardnet} when $\lambda = 0$.
Larger $\lambda > 0$ requires more iterations to enforce feasibility but can improve the optimality gap by orders of magnitude. See \Cref{fig:noncvx-lambda}.


\paragraph{{\ournet} Handles Many Constraints.}
{\hardnet} requires full row rank linear constraints and is thus limited to at most $2n$ constraints.
{\ournet} overcomes this limitation through LM-regularization.
\Cref{fig:cvx_qcqp_ineq} presents test results on convex QCQPs with
$n_{\ineq} \in \{10, 50, 100\}$ inequality constraints.
{\ournet} consistently produces feasible solutions within tolerance $10^{-4}$ across all test instances.
In contrast, {\dc} achieves feasibility on only $80\%$ of test instances on average and exhibits high sensitivity to random initialization.
{\hproj} shows substantial variation in feasibility rates for problems with $n_{\ineq} = 50$ and $100$ inequality constraints.
Inference time for the classical solver Gurobi is significantly slower than neural solvers and scales rapidly with the number of inequality constraints.


\begin{figure}
    \centering
    \begin{minipage}[t]{0.43\textwidth}
        \centering
        \includegraphics[width=\linewidth]{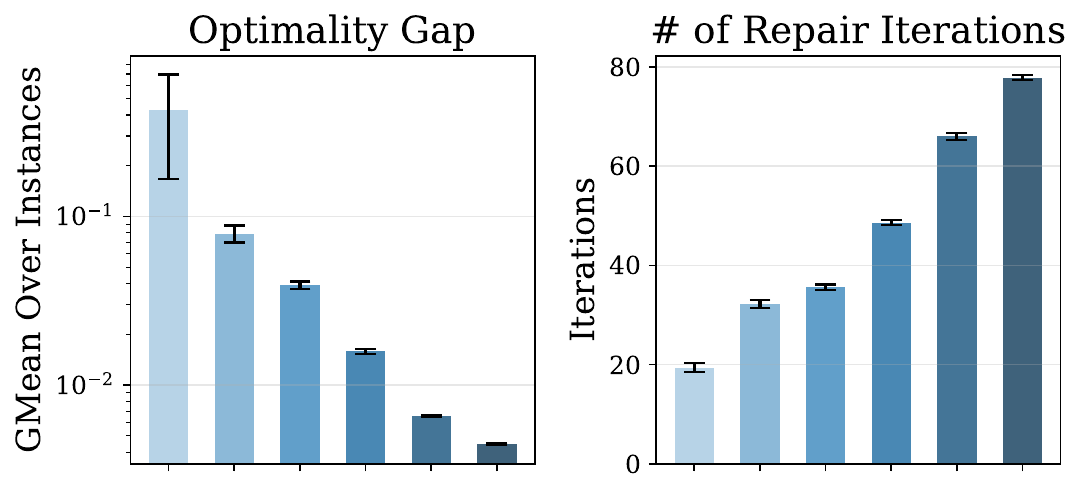}
        \includegraphics[width=\linewidth]{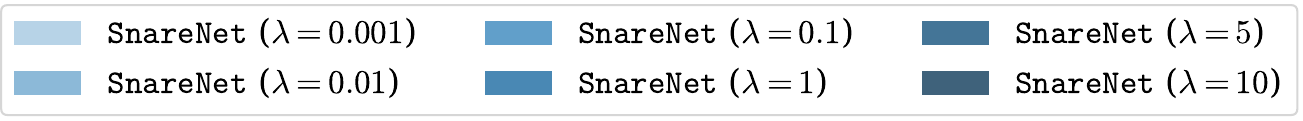}
        \caption{Optimality gap and repair iterations on $833$ test instances of NCLPs.
        }
        \label{fig:noncvx-lambda}
    \end{minipage}
    \hfill
    \begin{minipage}[t]{0.54\textwidth}
        \centering
        \includegraphics[width=\linewidth]{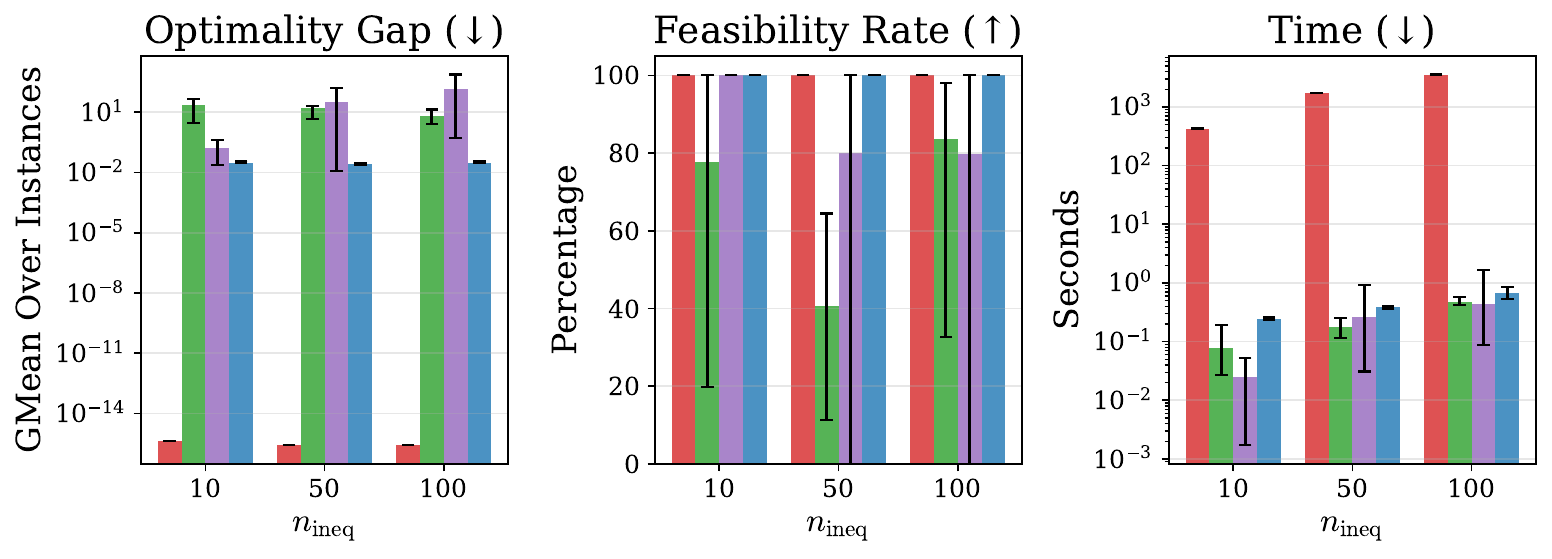}
        \includegraphics[width=\linewidth]{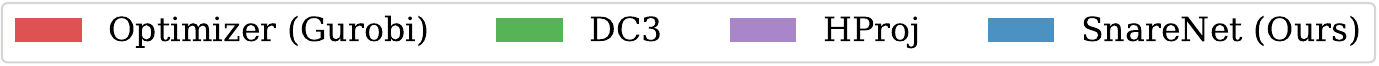}
        \caption{Performance on convex QCQPs with various number of inequality constraints.
        }
        \label{fig:cvx_qcqp_ineq}
    \end{minipage}
\end{figure}

\paragraph{Adaptive Relaxation beats Soft Epochs.}
\Cref{tab:ablation_noncvx_warmup} summarizes an ablation study on the warm-up strategy used in {\hardnet} and {\ournet} with different $\lambda$ values.
Each model was trained for 1000 epochs in total, with the warm-up strategy applied for the first 500 epochs.
The feasibility tolerance of {\ournet} was set to $10^{-8}$ for computational efficiency.
We make three observations:
1) Warm-up strategy is required to achieve better optimality.
2) Adaptive relaxation ({\adarel}) reduces both the maximum and geometric mean of the optimality gap compared to other warm up strategies.
3) The improvement of {\adarel} over soft epochs is more pronounced for smaller $\lambda$ (\eg {\hardnet} $\lambda=0$).

\begin{table}[h]
\centering
\resizebox{\linewidth}{!}{
\begingroup
\begin{tabular}{llcccccc}
\toprule
Method & Warm-up & Max Opt. Gap. & GMean Opt. Gap. & Max Ineq. Error & Gmean Ineq. Error & Max Eq. Error & Gmean Eq. Error \\
\midrule
{\hardnet} (i.e. $\lambda=0$)  & None & $(1.66 \pm 0.35) \times 10^{1}$ & $8.61 \pm 1.62$ & $(2.14 \pm 0.39) \times 10^{-13}$ & $(2.72 \pm 0.53) \times 10^{-16}$ & $(2.53 \pm 0.71) \times 10^{-13}$ & $(2.22 \pm 0.57) \times 10^{-14}$ \\
{\hardnet} (i.e. $\lambda=0$) & 500 Soft Epochs & $9.34 \pm 3.40$ & $1.73 \pm 0.58$ & $(5.88 \pm 1.42) \times 10^{-14}$ & $(1.03 \pm 0.00) \times 10^{-16}$ & $(9.55 \pm 1.07) \times 10^{-14}$ & $(8.50 \pm 0.39) \times 10^{-15}$ \\
{\hardnet} (i.e. $\lambda=0$) & 500 {\adarel} & $9.29 \pm 8.41$ & $1.01 \pm 0.25$ & $(5.41 \pm 1.10) \times 10^{-14}$ & $(1.04 \pm 0.00) \times 10^{-16}$ & $(9.04 \pm 0.99) \times 10^{-14}$ & $(8.54 \pm 0.28) \times 10^{-15}$ \\
\midrule
{\ournet} ($\lambda=0.001$) & None & $9.52 \pm 3.34$ & $6.15 \pm 1.75$ & $(5.67 \pm 1.08) \times 10^{-9}$ & $(2.09 \pm 1.67) \times 10^{-15}$ & $(8.23 \pm 1.24) \times 10^{-9}$ & $(3.24 \pm 1.00) \times 10^{-10}$ \\
{\ournet} ($\lambda=0.001$) & 500 Soft Epochs & $3.25 \pm 1.55$ & $(5.84 \pm 1.93) \times 10^{-1}$ & $(2.05 \pm 0.28) \times 10^{-9}$ & $(1.22 \pm 0.03) \times 10^{-16}$ & $(8.14 \pm 1.14) \times 10^{-9}$ & $(3.59 \pm 3.57) \times 10^{-10}$ \\
{\ournet} ($\lambda=0.001$) & 500 {\adarel} & $2.23 \pm 1.03$ & $(3.57 \pm 1.01) \times 10^{-1}$ & $(3.36 \pm 2.20) \times 10^{-9}$ & $(1.23 \pm 0.04) \times 10^{-16}$ & $(6.70 \pm 2.17) \times 10^{-9}$ & $(4.08 \pm 1.15) \times 10^{-10}$ \\
\midrule
{\ournet} ($\lambda=0.1$) & None & $5.63 \pm 0.49$ & $4.49 \pm 0.53$ & $(1.04 \pm 0.63) \times 10^{-8}$ & $(2.90 \pm 0.80) \times 10^{-15}$ & $(2.00 \pm 0.61) \times 10^{-8}$ & $(2.26 \pm 1.04) \times 10^{-9}$ \\
{\ournet} ($\lambda=0.1$) & 500 Soft Epochs & $(1.90 \pm 0.27) \times 10^{-1}$ & $(3.91 \pm 0.44) \times 10^{-2}$ & $(1.46 \pm 0.21) \times 10^{-8}$ & $(3.23 \pm 0.16) \times 10^{-16}$ & $(2.16 \pm 0.31) \times 10^{-8}$ & $(4.10 \pm 0.94) \times 10^{-10}$ \\
{\ournet} ($\lambda=0.1$) & 500 {\adarel} & $(2.11 \pm 0.25) \times 10^{-1}$ & $(3.79 \pm 0.28) \times 10^{-2}$ & $(1.39 \pm 0.37) \times 10^{-8}$ & $(3.30 \pm 0.13) \times 10^{-16}$ & $(2.08 \pm 0.51) \times 10^{-8}$ & $(4.08 \pm 1.26) \times 10^{-10}$ \\

\bottomrule
\end{tabular}
\endgroup}
\vspace*{0.5em}
\caption{Ablation on warm-up strategies for NCLPs. Values are mean $\pm$ std across $5$ random seeds.}
\label{tab:ablation_noncvx_warmup}
\end{table}

\subsection{Neural Control Policies}
\begin{minipage}[t]{0.68\textwidth}
Neural control policies use deep neural networks to learn mappings from system states $x$ to control actions $u$, either as standalone policies or to enhance traditional control methods.
In this experiment, we train a neural network policy $\pi_{\theta}(x)$ to control a unicycle system, avoiding obstacle collisions by enforcing safety constraints \cite{min2024hardnet,tayal2024collisionconeapproachcontrol}.
The neural policy is constructed as the sum of a nominal controller $\pi_{\nom}(x)$, designed without obstacle awareness, and a learned correction term $\mcal{M}_{\theta}(x)$:
$\pi_{\theta}(x) := \pi_{\nom}(x) + \mcal{M}_{\theta}(x)$.
We use \emph{control barrier functions} (CBFs) \cite{ames2019controlbarrierfunctionstheory} defined in \Cref{app:cbf} to parametrize the safe set.
Collision-free trajectories are guaranteed by enforcing
\begin{equation*} 
    \nabla h_j(x)^T(F(x) + G \pi_{\theta}(x) ) \geq -\alpha h_j(x)
\end{equation*}
for each obstacle with CBF $h_j$ and some $\alpha > 0$.
The number of constraints scales linearly with the number of obstacles.
{\hardnet} can avoid at most two obstacles as it requires full row rank constraints and the system has only two controls, whereas {\ournet} handles arbitrarily many obstacles.
\Cref{fig:neural_traj} shows the learned neural policies and the detailed evaluation statistics can be found in \Cref{app:cbf}.
Across all test instances, {\ournet} achieves a 99.4\% feasibility rate, compared to {\dc}'s 64.9\% feasibility rate, demonstrating the effectiveness of {\ournet} in enforcing safety constraints in neural control policies.
Since we use a higher order CBF, a trajectory may violate a constraint without visually colliding with any obstacle, \eg DC3 trajectory in \Cref{fig:seed8}.
\end{minipage}
\hfill
\begin{minipage}[t]{0.28\textwidth}
    \vspace*{-3.25em}
    \captionsetup{type=figure}
    \centering
    \begin{subfigure}[b]{\linewidth}
        \centering
        \caption{Initialization box 1}
        \includegraphics[width=\textwidth]{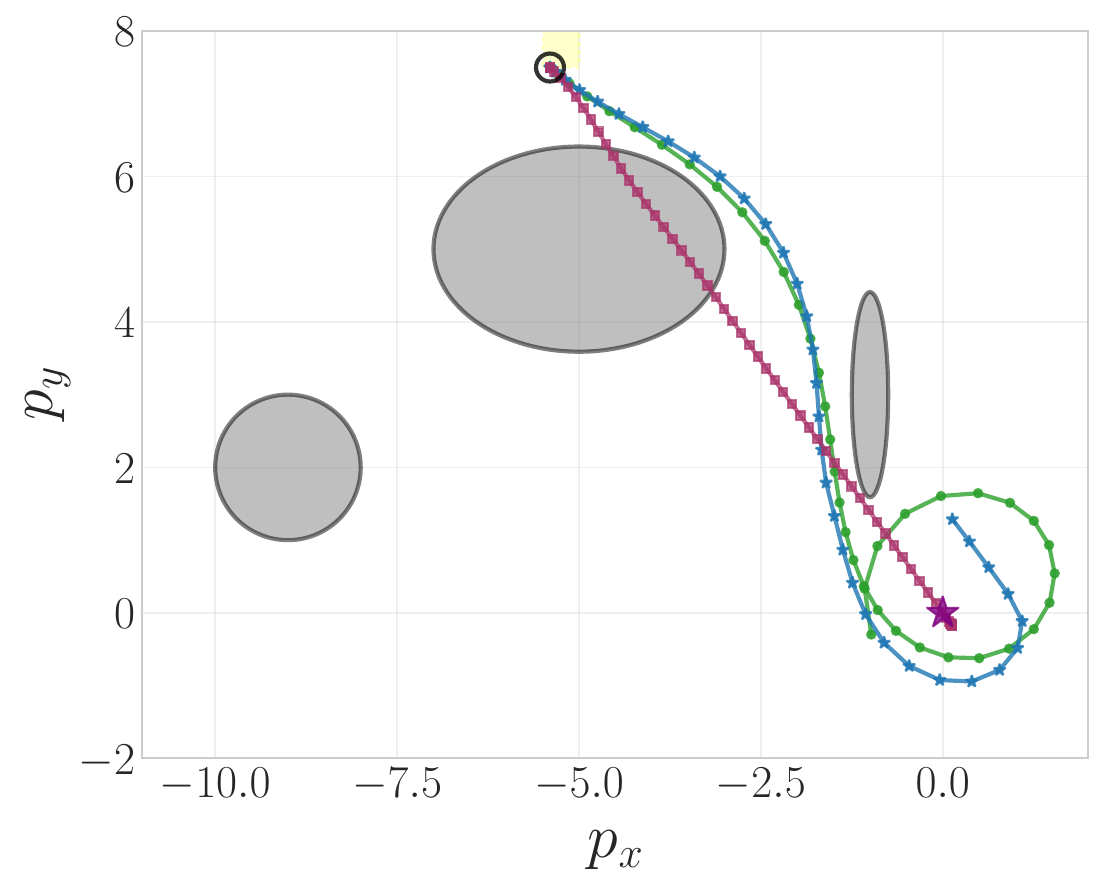}
        \label{fig:seed8}
    \end{subfigure}
    \begin{subfigure}[b]{\linewidth}
        \centering
        \caption{Initialization box 2}
        \includegraphics[width=\textwidth]{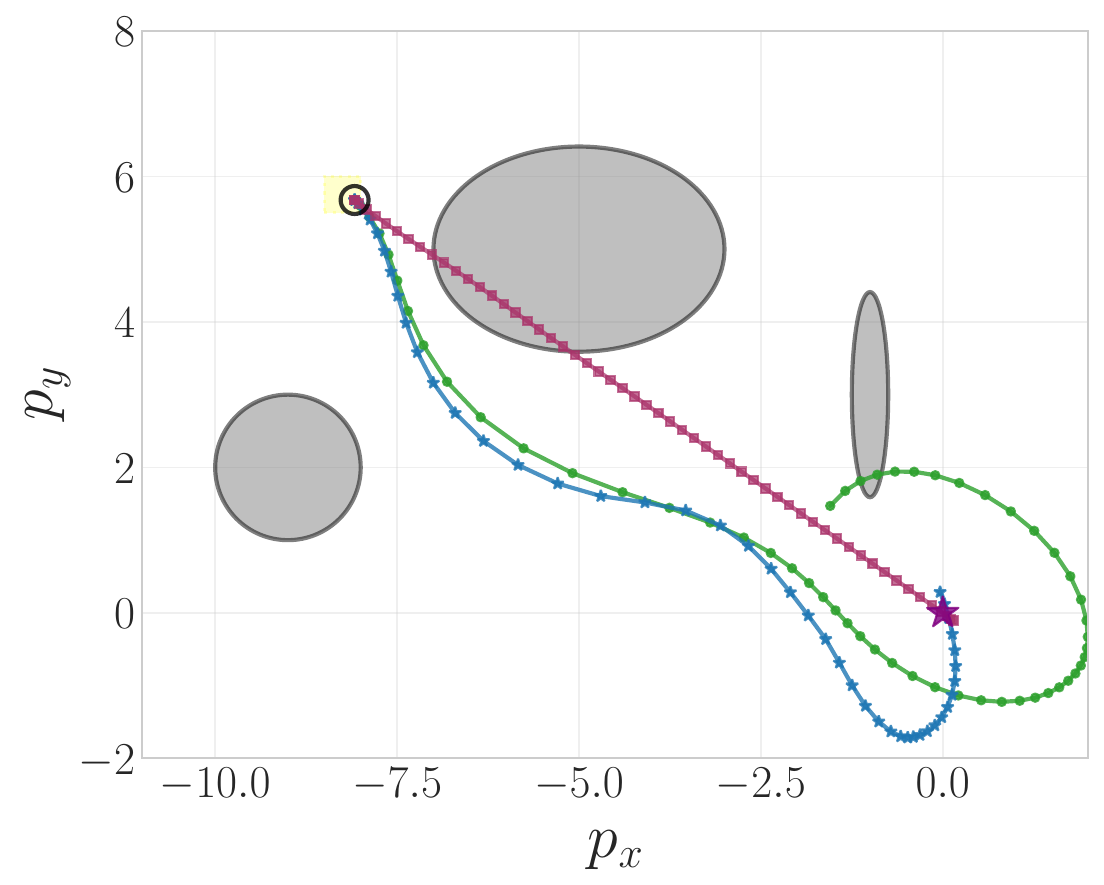}
        \label{fig:seed9}
    \end{subfigure}
    \begin{subfigure}[b]{\linewidth}
        \centering
        \includegraphics[width=\linewidth]{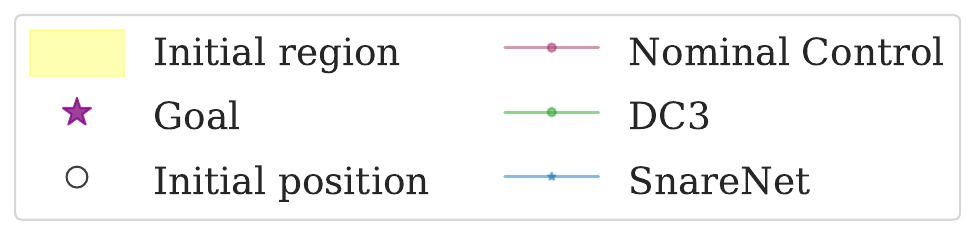}
    \end{subfigure}
    \caption{Simulated trajectories from neural policies.}
    \label{fig:neural_traj}
\end{minipage}%

\section{Conclusion} \label{sec:conclusion}
We introduce {\ournet}, a framework to enforce input-dependent constraints on the output of neural networks with feasibility guarantees.
Our empirical evaluation demonstrates that {\ournet} consistently produces feasible solutions with better objective values, even for \emph{non-convex} constraints.
Our key innovations include new perspective to reformulate an inequality feasibility problem as a more tractable equation solving problem,
regularized Newton updates for efficient and stable convergence,
and adaptive relaxation that progressively tightens the constraints.

\bibliography{ref}
\bibliographystyle{plainnat}

\appendix
\section{Convergence Analysis of {\ournet}} \label{app:proofs}
The convergence analysis of {\ournet} is based on the observation that 
{\ournet}'s repair layer update with $\lambda > 0$ is equivalent to preconditioned gradient descent (PGD) on the constraint violation function defined as
\begin{equation} \label{eq:F}
F(y) = \frac{1}{2} \| g(y) - \mathcal{P}_{\mathbf{B}(\ell, u)}(g(y)) \|^2.
\end{equation}
\Cref{sec:F-property} establishes the analytical properties of $F$ that are crucial for the convergence analysis. The convergence analysis for linear and convex constraints follows in \Cref{sec:pf-linear} and \Cref{sec:pf-cvx}, respectively.
\subsection{Analytical Properties of $F$} \label{sec:F-property}
The function $F$ is twice differentiable almost everywhere, despite the presence of the seemingly non-smooth projection operator $\mathcal{P}_{\mathbf{B}(\ell, u)}$.
Moreover, when $g(y) = Ay$ is linear with $A \in \mathbb{R}^{m \times n}$, the function $F$ is convex. See \Cref{prop:F-property}.

\begin{proposition}[Differentiability \& Convexity] \label{prop:F-property}
Let $g : \mathbb{R}^n
\rightarrow \mathbb{R}^m$ be a twice differentiable function. Then $F
(y)$ defined in \eqref{eq:F} is twice differentiable almost everywhere and its gradient is given by
\begin{equation} \label{eq:grad-F}
\nabla F (y) = J_g (y)^{\top} (g (y) -\mathcal{P}_{\mathbf{B} (\ell, u)}
   (g (y))).
\end{equation}
In particular, if $g = A \in \mathbb{R}^{m \times n}$ is linear, the constraint violation function $F (y)$ is convex.
\end{proposition}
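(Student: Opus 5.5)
The plan is to write $F$ as a composition $F = \phi \circ g$, where
\begin{equation*}
\phi(z) := \tfrac{1}{2}\| z - \mathcal{P}_{\mathbf{B}(\ell,u)}(z)\|^2 = \tfrac12 \operatorname{dist}(z,\mathbf{B}(\ell,u))^2 .
\end{equation*}
We then study $\phi$ on $\mathbb{R}^m$ and transfer its properties to $F$ through the chain rule.

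\textbf{Properties of $\phi$.} By \eqref{eq:box-proj}, $\phi$ separates across coordinates:
\begin{equation*}
\phi(z) = \tfrac12 \sum_{i=1}^m \bigl( \relu(z_i - u_i)^2 + \relu(\ell_i - z_i)^2 \bigr),
\end{equation*}
with the convention that terms with infinite bounds vanish. Each summand is a squared ReLU, which is convex and $C^1$. Its derivative is $\relu(z_i-u_i)$, respectively $-\relu(\ell_i-z_i)$. Summing gives $\nabla\phi(z) = z - \mathcal{P}_{\mathbf{B}(\ell,u)}(z)$, and this gradient is Lipschitz. Alternatively, one can invoke the standard fact that half the squared distance to a closed convex set is convex and $C^1$ with gradient $z - \mathcal{P}(z)$. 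The gradient $\nabla \phi$ is piecewise linear. It is differentiable, with diagonal $0/1$ Hessian, everywhere except on the finite union of hyperplanes $\{z_i = \ell_i\}\cup\{z_i=u_i\}$.

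\textbf{Gradient formula and twice differentiability.} Since $g$ is differentiable and $\phi$ is $C^1$, the chain rule gives $F\in C^1$ with $\nabla F(y) = J_g(y)^\top \nabla\phi(g(y))$, which is exactly \eqref{eq:grad-F}. For second derivatives, write $\nabla F = J_g^\top\,(\nabla\phi\circ g)$. The factor $J_g$ is $C^1$ because $g$ is twice differentiable. The factor $\nabla\phi\circ g$ is differentiable at every $y$ with $g_i(y)\notin\{\ell_i,u_i\}$ for all $i$. Hence $F$ is twice differentiable off the set $S:=\bigcup_i \{y : g_i(y)\in\{\ell_i,u_i\}\}$.

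\textbf{Main obstacle: showing $S$ is null.} This step is the delicate one, since a general $g_i$ can be constant on an open set. We handle it as follows.
\begin{itemize}
\item Where $\nabla g_i \neq 0$, the level set $\{g_i = c\}$ is a $C^1$ hypersurface by the implicit function theorem, and hence has measure zero.
\item Where $g_i$ is locally constant equal to a boundary value, $\nabla\phi\circ g$ is locally constant in that coordinate. Twice differentiability therefore still holds on the interior of such a plateau.
\item There remains the part of the level set where $\nabla g_i = 0$ but $g_i$ is not locally constant. We would argue this part is null by a Sard-type / Lebesgue density argument: almost every point of $\{g_i=c\}$ is a density point where $\nabla g_i=0$ and the second derivative vanishes as well. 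If this argument proves fiddly, we would fall back on Rademacher: $\nabla\phi$ is Lipschitz, so $\nabla \phi\circ g$ is locally Lipschitz and differentiable a.e., which directly gives twice differentiability of $F$ almost everywhere.
\end{itemize}
We will adopt the Rademacher route as the main argument, because it is clean.

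\textbf{Convexity for linear $g$.} If $g(y)=Ay$, then $F(y)=\phi(Ay)$ is the composition of the convex function $\phi$ with a linear map, so it is convex. Concretely, for $\theta\in[0,1]$,
\begin{equation*}
\phi\bigl(A(\theta y+(1-\theta)y')\bigr)=\phi\bigl(\theta Ay+(1-\theta)Ay'\bigr)\le\theta\phi(Ay)+(1-\theta)\phi(Ay').
\end{equation*}
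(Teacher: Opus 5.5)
Your proof is correct. Its skeleton matches the paper's: your $\phi$ is the paper's separable $R$, the gradient formula comes from the same chain-rule computation with $\nabla R(z)=z-\mathcal{P}_{\mathbf{B}(\ell,u)}(z)$, and convexity for $g=A$ is the same ``convex composed with linear'' argument.

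The real difference is in the almost-everywhere twice differentiability. The paper only checks that each scalar piece $r_{[\ell_i,u_i]}$ is twice differentiable away from the two points $\ell_i,u_i$. It then asserts that this passes to $F=R\circ g$. That step is not automatic, because preimages of null sets under a smooth $g$ need not be null: $g$ can send a set of positive measure into the kink hyperplanes. You correctly identify this as the delicate point, and your Rademacher argument closes it using only facts already at hand:
\begin{itemize}
\item $\nabla R$ is $1$-Lipschitz (the paper proves this via firm nonexpansiveness in \Cref{lem:F-upper-bd});
\item $g$ is $C^1$, hence locally Lipschitz, so $\nabla R\circ g$ is differentiable a.e.;
\item multiplying by $J_g^\top$, which is differentiable everywhere, gives a.e.\ differentiability of $\nabla F$.
\end{itemize}
Here you only need $J_g$ differentiable, not $C^1$ as you wrote; twice differentiability gives exactly that. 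So the paper's version is shorter but leaves this step unjustified, while yours is complete.

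One side claim in your third bullet is false, though you do not rely on it. The part of $\{g_i=c\}$ where $\nabla g_i=0$ but $g_i$ is not locally constant need not be null. Take $u_i=0$ and $g_i\ge 0$ smooth whose zero set is exactly a closed nowhere-dense set $K$ of positive measure, e.g.\ a fat Cantor set; then $\nabla g_i=0$ on $K$.

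It also does not need to be null. If $g_i(y)=u_i>\ell_i$ and $\nabla g_i(y)=0$, then $0\le\relu(g_i(y+v)-u_i)\le|g_i(y+v)-g_i(y)|=o(\|v\|)$. So the $i$-th component of $\nabla R\circ g$ is differentiable at $y$ anyway, and the same holds at $\ell_i$. Hence the non-differentiability set lies in $\bigcup_i\{y: g_i(y)\in\{\ell_i,u_i\},\ \nabla g_i(y)\neq 0\}$, which is null by the implicit function theorem. This gives an elementary alternative to Rademacher.
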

\begin{proof}
Observe that $F(y) = R(g(y))$ a composition of $g$ and a separable function $R: \mathbb{R}^m \rightarrow \mathbb{R}$ defined by
\begin{equation} \label{eq:R}
    R(z) := \frac{1}{2} \| z - \mathcal{P}_{\mathbf{B}(\ell, u)}(z) \|^2 
    = \frac{1}{2} \sum_{i=1}^m (z_i - \mathcal{P}_{[\ell_i, u_i]} (z_i))^2,
\end{equation}
where $\mathcal{P}_{[\ell_i, u_i]}$ being the one-dimensional projection operator onto the interval $[\ell_i, u_i]$.
\begin{itemize}
    \item \textbf{Almost everywhere twice differentiability.} 
    It suffices to show that $r_{[\ell, u]}(a) := \frac{1}{2}(a - \mathcal{P}_{[\ell, u]} (a))^2$ with $a \in \mathbb{R}$ is twice differentiable almost everywhere since $R(z) = \sum_{i=1}^m r_{[\ell_i, u_i]}(z_i)$ is the sum of such functions and $F(y) = R(g(y))$ is a composition of $R$ and a twice differentiable $g$.
    The piecewise definition of $r(a)$ and its derivatives $r' (a)$ and $r'' (a)$ are
    \begin{align*}
    r_{[\ell, u]}(a) &= \begin{cases}\begin{array}{ll}
        \frac{1}{2} (a - \ell)^2, & \text{if } a < \ell ;\\
        0, & \text{if } \ell \leq a \leq u ;\\
        \frac{1}{2} (a - u)^2, & \text{if } a > u,
    \end{array} \end{cases}
    r_{[\ell, u]}' (a) = \begin{cases} \begin{array}{ll}
        a - \ell, & \text{if } a < \ell ;\\
        0, & \text{if } \ell \leq a \leq u ;\\
        a - u, & \text{if } a > u,
    \end{array} \end{cases} \\
    r_{[\ell, u]}'' (a) &= \begin{cases} \begin{array}{ll}
        0, & \text{if } \ell < a < u ;\\
        1, & \text{if } a < \ell \text{ or } a > u.
        \end{array} \end{cases}
    \end{align*}
    Note that $r_{[\ell, u]}' (a) = a - \mathcal{P}_{[\ell, u]} (a)$ is continuous everywhere,
    and the second derivative $r_{[\ell, u]}'' (a)$ is well-defined for all $a \in \mathbb{R}$ except at the two points $a = \ell$ and $a = u$.
    Thus, $r_{[\ell, u]}(a)$ is twice differentiable almost everywhere, and so is $F(y)$.

    \item \textbf{Gradient expression.} 
    Note that $\nabla R(z) = z - \mathcal{P}_{\mathbf{B}(\ell, u)}(z)$ since $r_{[\ell, u]}' (a) = a - \mathcal{P}_{[\ell, u]} (a)$. 
    The gradient of $F(y)$ follows by the chain rule:
    \begin{equation*} 
    \nabla F (y) = J_g (y)^{\top} \nabla R(g(y)) = J_g (y)^{\top} (g (y) -\mathcal{P}_{\mathbf{B} (\ell, u)}
    (g (y))).
    \end{equation*}

    \item \textbf{Convexity of $F$ when $g$ is linear.}
    Observe that $R(z) = \sum_{i=1}^m r_{[\ell_i, u_i]}(z_i)$ is a sum of convex functions since each $r_{[\ell_i, u_i]}$ is convex. 
    Thus, $R$ is convex. As $F(y) = R(Ay)$ is a composition of a convex function $R$ and a linear function $Ay$, $F$ is convex.
\end{itemize}
\end{proof}

Using expression \eqref{eq:grad-F} of $\nabla F(y)$, {\ournet}'s repair update \eqref{eq:LM-update} with $\lambda > 0$ can be expressed as
\begin{align}
  y^{k + 1} & = y^k - (J_g (y^k)^{\top} J_g (y^k) + \lambda I)^{- 1} J_g (y^k)^{\top}
  (g (y^k) -\mathcal{P}_{\mathbf{B} (\ell, u)} (g (y^k))) \nonumber\\
  &= y^k - (J_g (y^k)^{\top} J_g (y^k) + \lambda I)^{- 1} \nabla
  F (y^k). \label{eq:pgd}
\end{align}
\Cref{eq:pgd} can be interpreted as a PGD step 
on $F (y)$ with preconditioner $P_k := (J_g (y^k)^{\top} J_g (y^k) + \lambda I)^{- 1}$.
To show the convergence under the choice $P_k$, we first establish an upper bound on $F(y)$ in \Cref{lem:F-upper-bd} that is analogous to the quadratic upper bound for smooth functions, and then show that $F$ satisfies the Polyak-Lojasiewicz (PL) condition under the preconditioned norm when $J_g(y)$ has full-row rank in \Cref{lem:F-PL}. 
Finally, we prove the convergence of {\ournet} in \Cref{thm:linear-main} and \Cref{thm:nonlinear-main} for linear and nonlinear constraints, respectively.

\begin{lemma}[Upper bound on $F$] \label{lem:F-upper-bd}
    Let $g : \mathbb{R}^n \rightarrow \mathbb{R}^m$ be a twice
    differentiable function with an $L_g$-Lipschitz continuous Jacobian. Then for
    any $x, y \in \mathbb{R}^n$, the constraint violation function $F (y)$ defined in \eqref{eq:F} satisfies the bound
    \begin{equation} \label{eq:F-upper-bd}
    \begin{aligned}
    F (y) \leq{} & F (x) + \langle \nabla F (x), y - x \rangle + \frac{1}{2} (y
    - x)^{\top} J_g (x)^{\top} J_g (x) (y - x)\\
    & + \frac{L_g}{2} \left[ \| \nabla R (g (x)) \| + \| J_g (x) (y - x) \|
    + \frac{L_g}{4} \| y - x \|^2 \right] \| y - x \|^2.
    \end{aligned}
    \end{equation}
    In particular, if $g(y) = A y$ is linear with $A \in \mathbb{R}^{m \times n}$, the bound in \eqref{eq:F-upper-bd} reduces to
    \begin{equation} \label{eq:F-upper-bd-linear}
        F (y) \leq F (x) + \langle \nabla F (x), y - x \rangle + \frac{1}{2} (y - x)^{\top} A^{\top} A (y - x) .
    \end{equation}
\end{lemma}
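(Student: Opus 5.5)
The plan is to write $F = R\circ g$ as in the proof of \Cref{prop:F-property}, with $R(z) = \tfrac12\|z - \mathcal{P}_{\mathbf{B}(\ell,u)}(z)\|^2$. We first establish a quadratic upper bound for $R$ in the constraint space, and then pull it back through a first-order expansion of $g$ with remainder. The key fact about $R$ is that its gradient $\nabla R(z) = z - \mathcal{P}_{\mathbf{B}(\ell,u)}(z)$ is $1$-Lipschitz. This holds because $I - \mathcal{P}_{\mathbf{B}}$ is firmly nonexpansive; alternatively, coordinatewise $r'_{[\ell,u]}$ is piecewise linear with slopes in $\{0,1\}$. The standard descent lemma then gives, for all $z, w \in \mathbb{R}^m$,
\begin{equation*}
R(w) \le R(z) + \langle \nabla R(z), w - z\rangle + \tfrac12 \|w - z\|^2 .
\end{equation*}

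Next we apply this with $z = g(x)$ and $w = g(y)$, and decompose $g(y) - g(x) = J_g(x)(y-x) + e$. Since $J_g$ is $L_g$-Lipschitz, the integral form of Taylor's theorem gives $\|e\| \le \tfrac{L_g}{2}\|y-x\|^2$. Substituting, the linear term becomes $\langle \nabla R(g(x)), J_g(x)(y-x)\rangle + \langle \nabla R(g(x)), e\rangle$. The first piece equals $\langle \nabla F(x), y-x\rangle$ by \eqref{eq:grad-F}, and the second is at most $\|\nabla R(g(x))\|\,\tfrac{L_g}{2}\|y-x\|^2$.

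The quadratic term expands as
\begin{equation*}
\tfrac12\|J_g(x)(y-x)\|^2 + \langle J_g(x)(y-x), e\rangle + \tfrac12\|e\|^2 .
\end{equation*}
We bound the cross term by $\|J_g(x)(y-x)\|\,\tfrac{L_g}{2}\|y-x\|^2$ and the last term by $\tfrac{L_g^2}{8}\|y-x\|^4$. Collecting the three remainder contributions yields exactly
\begin{equation*}
\tfrac{L_g}{2}\Bigl[\|\nabla R(g(x))\| + \|J_g(x)(y-x)\| + \tfrac{L_g}{4}\|y-x\|^2\Bigr]\|y-x\|^2 ,
\end{equation*}
which matches \eqref{eq:F-upper-bd}. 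For the linear case $g(y) = Ay$, we have $e = 0$ (equivalently $L_g = 0$), so all remainder terms vanish and \eqref{eq:F-upper-bd-linear} follows with $J_g \equiv A$.

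The only step requiring care is the $1$-Lipschitzness of $\nabla R$, since $R$ is only almost-everywhere twice differentiable and we cannot simply appeal to a Hessian bound. We handle it coordinatewise. The map $a \mapsto a - \mathcal{P}_{[\ell_i,u_i]}(a)$ is monotone, continuous, and piecewise linear with slopes $0$ or $1$, hence $1$-Lipschitz, and this remains valid when $\ell_i$ or $u_i$ is infinite. Separability of $R$ then gives $\|\nabla R(z) - \nabla R(w)\| \le \|z - w\|$. The descent lemma follows by integrating $\nabla R$ along the segment from $z$ to $w$, which is legitimate because $\nabla R$ is continuous.
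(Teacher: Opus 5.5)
Your proposal is correct and follows the paper's proof essentially step for step: $1$-smoothness of $R$, the descent lemma applied at $g(x)$, the integral Taylor bound $\|e\|\le \tfrac{L_g}{2}\|y-x\|^2$, and Cauchy--Schwarz on the linear and cross terms. The paper gets $1$-smoothness from firm nonexpansiveness of $\mathcal{P}_{\mathbf{B}(\ell,u)}$ rather than of $I-\mathcal{P}_{\mathbf{B}(\ell,u)}$ or coordinatewise, which makes no real difference, and your expansion of $\tfrac12\|g(y)-g(x)\|^2$ puts the factor $\tfrac12$ in the right place, whereas the paper's displayed bound \eqref{eq:pf-bd-3} misplaces it (as written it already fails when $L_g=0$).
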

\begin{proof} 
    We first show that $R(z)$ defined in \eqref{eq:R} is $1$-smooth, or equivalently, its gradient $\nabla R(z) = z - \mathcal{P}_{\mathbf{B}(\ell, u)}(z)$ is $1$-Lipschitz continuous: 
    for any $z, z' \in \mathbb{R}^m$, we have
    \begin{align*}
    \| \nabla R (z) - \nabla R (z') \|^2 & = \| (z -\mathcal{P}_{\mathbf{B}
    (\ell, u)} (z)) - (z' -\mathcal{P}_{\mathbf{B} (\ell, u)} (z')) \|^2\\
    & = \| z - z' \|^2 - 2 \langle \mathcal{P}_{\mathbf{B} (\ell, u)} (z)
    -\mathcal{P}_{\mathbf{B} (\ell, u)} (z'), z - z' \rangle + \| \mathcal{P}_{\mathbf{B} (\ell, u)} (z) -\mathcal{P}_{\mathbf{B} (\ell,u)} (z') \|^2\\
    & \leq \| z - z' \|^2 - \| \mathcal{P}_{\mathbf{B} (\ell, u)} (z)
    -\mathcal{P}_{\mathbf{B} (\ell, u)} (z') \|^2 \tag{by firmly non-expansiveness of $\mathcal{P}_{\mathbf{B} (\ell, u)}$}\\
    & \leq \| z - z' \|^2.
    \end{align*}
    Apply the $1$-smoothness of $R(z)$ to upper bound $F(y) = R(g(y))$ by
    \begin{align}
    F (y) = R (g (y)) & \leq R (g (x)) + \langle \nabla R (g (x)), g (y) - g (x) \rangle
    + \frac{1}{2} \| g (y) - g (x) \|^2 \nonumber\\
    & = F (x) + \langle \nabla R (g (x)), g (y) - g (x) \rangle + \frac{1}{2} \| g (y) - g (x) \|^2. \label{eq:pf-bd-1}
    \end{align}
    Let $e_x(y) := g (y) - g (x) - J_g (x) (y - x)$ denote the error of
    linear approximation.
    Using the fundamental theorem of calculus and the $L_g$-Lipschitz continuity of $J_g$, the error can be bounded by
    \begin{equation} \label{eq:g-linear-approx-err}
    \| e_x(y) \| = \left\| \int^1_0 [J_g (x + t (y - x)) - J_g (x)] (y - x) \,dt \right\| 
    \leq \int^1_0 L_g t \| y - x \|^2 \,dt
    = \frac{L_g}{2} \| y - x \|^2.
    \end{equation}
    Cauchy-Schwartz inequality and \eqref{eq:g-linear-approx-err} imply
    \begin{align}
    \langle \nabla R (g (x)), g (y) - g (x) \rangle & = \langle \nabla R (g
    (x)), g (y) - g (x) - J_g (x) (y - x) \rangle + \langle \nabla R (g (x)),
    J_g (x) (y - x) \rangle \nonumber\\
    & \leq \| \nabla R (g (x)) \|  \| g (y) - g (x) - J_g (x) (y - x) \| +
    \langle J_g (x)^{\top} \nabla R (g (x)), y - x \rangle \nonumber\\
    & \leq \frac{L_g \| \nabla R (g (x)) \|}{2} \| y - x \|^2 + \langle
    \nabla F (x), y - x \rangle. \label{eq:pf-bd-2}
    \end{align}
    \Cref{eq:g-linear-approx-err} also implies
    \begin{align}
    \| g (y) - g (x) \|^2 & = \| e_x(y) + J_g (x) (y - x) \|^2 \nonumber\\
    & = \| J_g (x) (y - x) \|^2 + 2 \langle J_g (x) (y - x), e_x(y)
    \rangle + \| e_x(y) \|^2 \nonumber\\
    & \leq \frac{1}{2} (y - x)^{\top} J_g (x)^{\top} J_g (x) (y - x) + L_g \| J_g (x) (y - x) \| \| y - x \|^2 +
    \frac{L_g^2}{4} \| y - x \|^4 . \label{eq:pf-bd-3}
    \end{align}
    Combine \eqref{eq:pf-bd-1}, \eqref{eq:pf-bd-2}, and \eqref{eq:pf-bd-3} to conclude \eqref{eq:F-upper-bd}. 
    When $g(y) = A(y)$ is linear, the Jacobian of $g$ is constant $J_g(y) = A$ for all $y \in \mathbb{R}^n$ and thus $L_g = 0$. The bound reduces to \eqref{eq:F-upper-bd-linear}.
\end{proof}

\begin{lemma}[PL-condition in preconditioned norm] \label{lem:F-PL}
    Suppose $J_g (y)$ has full-row rank with minimum singular value $\sigma_{\min} (J_g (y)) > 0$ and $\mcal{C} := \{ y \in \mathbb{R}^n \mid \ell \leq g(y) \leq u \}$ is non-empty. The constraint violation function $F (y)$ defined in \eqref{eq:F} satisfies the Polyak-Lojasiewicz (PL) condition under the norm $\|
\cdot \|_{(J_g (y)^{\top} J_g (y) + \lambda I)^{- 1}}$:
\begin{equation*}
\frac{1}{2} {\| \nabla F (y) \|^2_{(J_g (y)^{\top} J_g (y) + \lambda I)^{-1}}}  \geq \frac{\sigma^2_{\min} (J_g (y))}{\sigma^2_{\min} (J_g (y)) + \lambda} [F (y) - \min_{y \in \mathbb{R}^n} F(y)] = \frac{\sigma^2_{\min} (J_g (y))}{\sigma^2_{\min} (J_g (y)) + \lambda} F (y).
\end{equation*}
\end{lemma}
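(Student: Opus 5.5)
The plan is to write $\nabla F$ through the residual in the constraint space and reduce the inequality to an eigenvalue bound for an $m\times m$ matrix built from $J_g(y)$. Fix $y$, write $J := J_g(y)\in\mathbb{R}^{m\times n}$, and set $r := g(y) - \mathcal{P}_{\mathbf{B}(\ell,u)}(g(y)) = \nabla R(g(y))$. By \Cref{prop:F-property} we have $\nabla F(y) = J^\top r$, and by definition $F(y) = \tfrac12\|r\|^2$. The preconditioned squared norm then becomes
\begin{equation*}
\|\nabla F(y)\|^2_{(J^\top J+\lambda I)^{-1}} = r^\top J (J^\top J + \lambda I)^{-1} J^\top r .
\end{equation*}

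Next I would apply the push-through identity from \Cref{sec:compute_rmks}, namely $(J^\top J+\lambda I)^{-1}J^\top = J^\top (JJ^\top+\lambda I)^{-1}$. This rewrites the quadratic form as $r^\top M r$ with $M := JJ^\top (JJ^\top+\lambda I)^{-1}$.

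The matrix $JJ^\top$ is symmetric positive definite, since $J$ has full row rank. Its eigenvalues are $s_i^2$, where $s_i \ge \sigma_{\min}(J) > 0$ are the singular values of $J$. Because $JJ^\top$ and $(JJ^\top+\lambda I)^{-1}$ are simultaneously diagonalizable, $M$ is symmetric with eigenvalues $s_i^2/(s_i^2+\lambda)$. The map $t\mapsto t/(t+\lambda)$ is increasing on $t\ge 0$ for $\lambda>0$, so every eigenvalue of $M$ is at least $\sigma_{\min}^2/(\sigma_{\min}^2+\lambda)$. It follows that $r^\top M r \ge \frac{\sigma_{\min}^2}{\sigma_{\min}^2+\lambda}\|r\|^2$. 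Halving both sides gives
\begin{equation*}
\tfrac12\|\nabla F(y)\|^2_{(J^\top J+\lambda I)^{-1}} \ge \frac{\sigma_{\min}^2}{\sigma_{\min}^2+\lambda}\, F(y).
\end{equation*}

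Finally, I would identify $\min_y F = 0$. We have $F\ge 0$ everywhere, and for any $y^\ast\in\mathcal{C}$ the point $g(y^\ast)$ lies in $\mathbf{B}(\ell,u)$, so $F(y^\ast)=0$. Nonemptiness of $\mathcal{C}$ therefore makes the minimum equal to $0$, which yields the stated equality $F(y)-\min F = F(y)$.

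The only delicate step is the eigenvalue claim for $M$. A product of two symmetric matrices need not be symmetric, so commutativity must be used explicitly. I would do this via the SVD $J = U\Sigma V^\top$, which gives $M = U\,\mathrm{diag}\bigl(s_i^2/(s_i^2+\lambda)\bigr)U^\top$. The full row rank hypothesis is exactly what guarantees that all $m$ of these eigenvalues are strictly positive and bounded below as claimed.
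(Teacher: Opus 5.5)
Your proof is correct and follows the paper's argument. Like the paper, you write $\nabla F(y) = J_g(y)^\top d(y)$ with $d(y) = g(y) - \mathcal{P}_{\mathbf{B}(\ell,u)}(g(y))$, bound the quadratic form $d^\top J (J^\top J+\lambda I)^{-1}J^\top d$ from below using the spectrum $\sigma_i^2/(\sigma_i^2+\lambda)$, and use nonemptiness of $\mathcal{C}$ to get $\min F = 0$. The push-through identity and SVD only make explicit the symmetric eigenvalue computation that the paper states directly in terms of singular values.
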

\begin{proof}
Let $d (y) := g (y) - \mathcal{P}_{\mathbf{B}} (g
(y))$. Since $\nabla F (y) = J_g (y)^{\top} d (y)$, the gradient under preconditioned norm is
\begin{equation*}
    \| \nabla F (y) \|_{(J_g (y)^{\top} J_g (y) + \lambda I)^{- 1}}^2 = d
   (y)^{\top} J_g (y) [J_g (y)^{\top} J_g (y) + \lambda I]^{- 1} J_g
   (y)^{\top} d (y).
\end{equation*}
Let $\sigma_i (J_g (y))$ denote the $i$-th singular value of $J_g (y)$.
Then the singular values of $J_g (y) [J_g (y)^{\top} J_g (y) + \lambda I]^{- 1} J_g (y)^{\top}$ are $\frac{\sigma^2_i (J_g (y))}{\sigma^2_i (J_g (y)) + \lambda}$ for $i = 1, \ldots, m$ with minimum $\frac{\sigma^2_{\min} (J_g (y))}{\sigma^2_{\min} (J_g (y)) + \lambda}$. 
Since $F(y) = \frac{1}{2} \| d (y) \|^2$ and non-empty $\mathcal{C}$ guarantees $\min_{y \in \mathbb{R}^n} F(y) = 0$, we conclude
    \begin{equation*}
        \frac{1}{2} \| \nabla F (y) \|_{(J_g (y)^{\top} J_g (y) + \lambda I)^{-
    1}}^2 \geq \frac{\sigma^2_{\min} (J_g (y))}{2 (\sigma^2_{\min} (J_g (y)) +
    \lambda)} \| d (y) \|^2 = \frac{\sigma^2_{\min} (J_g (y))}{\sigma^2_{\min}
    (J_g (y)) + \lambda} F (y).
    \end{equation*}
\end{proof}

\subsection{Proof of \Cref{thm:linear-main}} \label{sec:pf-linear}

\begin{proof}
Denote $H_{\lambda} := A^{\top} A + \lambda I$. Then $y^{k
+ 1} = y^k - H_{\lambda}^{- 1} \nabla F (y^k)$ and $A^{\top} A \prec
H_{\lambda}$. Substitute $y = y^{k + 1}$ and $x = y^k$ 
into \eqref{eq:F-upper-bd-linear} to obtain
\begin{align}
  F (y^{k + 1}) & \leq F (y^k) + \langle \nabla F (y^k), y^{k + 1} - y^k
  \rangle + \frac{1}{2} (y^{k + 1} - y^k)^{\top} A^{\top} A (y^{k + 1} -
  y^k) \nonumber\\
  & \leq F (y^k) + \langle \nabla F (y^k), y^{k + 1} - y^k \rangle +
  \frac{1}{2} (y^{k + 1} - y^k)^{\top} H_{\lambda} (y^{k + 1} - y^k)\nonumber\\
  & = F (y^k) - \nabla F (y^k)^{\top} H_{\lambda}^{- 1} \nabla F (y^k) +
  \frac{1}{2} \nabla F (y^k)^{\top} H_{\lambda}^{- 1} \nabla F (y^k)\nonumber\\
  & \leq F (y^k) - \frac{1}{2} \| \nabla F (y^k) \|^2_{H_{\lambda}^{- 1}}, \label{eq:descent-linear}
\end{align}
proving the descent property for {\ournet}'s update. Since $A$ has full row rank, 
\Cref{lem:F-PL} guarantees $F (y)$ satisfies PL-condition and thus
\begin{equation*}
    F (y^{k + 1}) \leq F (y^k) - \frac{\sigma^2_{\min}}{\sigma^2_{\min} +
    \lambda} F (y^k) = \left( 1 - \frac{\sigma^2_{\min}}{\sigma^2_{\min} +
    \lambda} \right) F (y^k) .
\end{equation*}
\end{proof}

\subsection{Proof of \Cref{thm:nonlinear-main}} \label{sec:pf-cvx}

\begin{proof}
    Observe that $F$ is convex 
    under the convex constraints represented by $\mathcal{C} := \{ y \in \mathbb{R}^n \mid h(y) \leq 0,\, \ell_A \leq Ay \leq u_A \}$ since $F$ takes the form
    \begin{equation*}
        F (y) := \frac{1}{2} \| \max (0, h (y)) \|^2 
        + \frac{1}{2} \| A y -\mathcal{P}_{\mathbf{B} (\ell_A, u_A)} (A y) \|^2
    \end{equation*}
    and both $\|\max (0, h (y)) \|^2$ and $\| A y -\mathcal{P}_{\mathbf{B} (\ell_A, u_A)} (A y) \|^2$ are convex functions of $y$.
    Using {\ournet}'s notation, the constraint function $g$, the bounds $\ell, u$, and the Jacobian $J_g$ are (defined) as
    \begin{equation*}
        g (y) := \begin{bmatrix}
            h (y)\\
            A y
        \end{bmatrix}, ~
        \ell := \begin{bmatrix}
            - \infty\\
            \ell_A
        \end{bmatrix}, ~
        u := \begin{bmatrix}
            0\\
            u_A
        \end{bmatrix}, ~
        J_g (y) := \begin{bmatrix}
            J_h (y)\\
            A
        \end{bmatrix}.
    \end{equation*}
    The Jacobian of $g$ is $L$-Lipschitz continuous since $J_h (y)$ is $L$-Lipschitz continuous and $A$ is constant.

    Now, we show that our choice of $\lambda$ guarantees $y^k \in \mcal{D}$ for all $k \geq 0$ 
    and also the descent property $F (y^{k + 1}) \leq F (y^k)$.
    We prove by induction. 
    Note that $y^0 \in \mathcal{D}$ by definition. Suppose $y^k \in \mathcal{D}$ by induction hypothesis. 
    Substitute $y = y^{k + 1}$ and $x = y^k$ into \eqref{eq:F-upper-bd} to obtain
\begin{equation} \label{eq:pf-nonlinear-1}
    \begin{aligned}
    F (y^{k + 1}) & \leq F (y^k) + \langle \nabla F (y^k), y^{k + 1} - y^k
    \rangle + \tfrac{1}{2} (y^{k + 1} - y^k)^{\top} J_g (y^k)^{\top} J_g (y^k)
    (y^{k + 1} - y^k)\\
    &\quad + \tfrac{L_g}{2} \left[ \| \nabla R (g (y^k)) \| + \| J_g (y^k) (y^{k
    + 1} - y^k) \| + \tfrac{L_g}{4} \| y^{k + 1} - y^k \|^2 \right] \| y^{k + 1}
    - y^k \|^2 .
    \end{aligned}
\end{equation}
Since $y^k \in \mathcal{D}$, we have $\| \nabla R (g (y^k)) \| = \| g (y^k) - \mcal{P}_{\mbf{B}(\ell, u)}(g (y^k)) \| = \sqrt{2 F(y^k)} \leq \sqrt{2 F(y^0)} =: M$ and thus
\begin{align}
  \| J_g (y^k) (y^{k + 1} - y^k) \| &\leq \left\| J_g (y^k) (J_g (y^k)^{\top} J_g
  (y^k) + \lambda I)^{- 1} J_g (y^k)^{\top} \right\| \left\| \nabla R (g (y^k)) \right\| \leq M, \label{eq:pf-nonlinear-2}\\
  \| y^{k + 1} - y^k \|^2 & = \left\| (J_g (y^k)^{\top} J_g (y^k) + \lambda I)^{-1} J_g (y^k)^{\top} \right\|^2 \left\| \nabla R (g (y^k)) \right\|^2 \leq \tfrac{M^2}{4 \lambda} . \label{eq:pf-nonlinear-3}
\end{align}
Let $H_k := J_g (y^k)^{\top} J_g (y^k) + \lambda I$. Then {\ournet}'s
update reads $y^{k + 1} = y^k - H_k^{- 1} \nabla F (y^k)$ and \eqref{eq:pf-nonlinear-1} becomes
\begin{align}
  F (y^{k + 1}) & \leq F (y^k) + \langle \nabla F (y^k), y^{k + 1} - y^k
  \rangle + \tfrac{1}{2} (y^{k + 1} - y^k)^{\top} J_g (y^k)^{\top} J_g (y^k)
  (y^{k + 1} - y^k)\nonumber\\
  & \quad + \left( L_g M + \tfrac{L^2_g M^2}{32 \lambda} \right) \| y^{k + 1} -
  y^k \|^2 \tag{by \eqref{eq:pf-nonlinear-2} and \eqref{eq:pf-nonlinear-3}}\\
  & \leq F (y^k) + \langle \nabla F (y^k), y^{k + 1} - y^k \rangle +
  \tfrac{1}{2} (y^{k + 1} - y^k)^{\top} H_k (y^{k + 1} - y^k) \tag{by $\lambda^2 - L_g M \lambda - \tfrac{L^2_g M^2}{32} \geq 0$}\\
  & = F (y^k) - \tfrac{1}{2} \| \nabla F (y^k) \|_{H^{- 1}_k}^2, \label{eq:pf-nonlinear-4}
\end{align}
which guarantees $F (y^{k + 1}) \leq F (y^k)$ and thus $y^{k + 1} \in
\mathcal{D}$. 
The claim that $y^k \in \mathcal{D}$ for all $k \geq 0$ is proved by induction and the descent property is established.

Next, we prove the sublinear convergence using the convexity of $F$ and the descent property \eqref{eq:pf-nonlinear-4}.
Since $F$ is convex and the feasible set $\mathcal{C}$ is non-empty, we have
$\min_{y \in \mathbb{R}^n} F (y) = 0$ and
\begin{equation} \label{eq:pf-nonlinear-5}
    F (y^k) = F (y^k) - \min_{y \in \mathbb{R}^n} F (y) \leq \| \nabla F (y^k)
   \|_{H_k^{- 1}} \| y^k -\mathcal{P}_{\mathcal{C}} (y^k) \|_{H_k}.
\end{equation}
To bound $\| y^k -\mathcal{P}_{\mathcal{C}} (y^k) \|_{H_k}$, observe that $\| H_k \| \leq \sigma^2_{\max} (J_g (y^k)) + \lambda \leq \sigma^2_{\max}(J_h (y^k)) + \sigma^2_{\max} (A) + \lambda \leq \sigma^2 + \lambda$ since $y^k \in \mcal{D}$, and thus
\begin{equation} \label{eq:pf-nonlinear-6}
    \| y^k -\mathcal{P}_{\mathcal{C}} (y^k)
    \|^2_{H_k} \leq \| H_k \| \| y^k -\mathcal{P}_{\mathcal{C}} (y^k) \|^2 
    \leq (\sigma^2 + \lambda) D^2.
\end{equation}
Combine \eqref{eq:pf-nonlinear-4}, \eqref{eq:pf-nonlinear-5}, and \eqref{eq:pf-nonlinear-6} to obtain
\begin{equation} \label{eq:pf-nonlinear-7}
    F (y^{k + 1}) \leq F (y^k) - \frac{F (y^k)^2}{2 D^2 (\sigma^2 + \lambda)} .
\end{equation}
If $F (y^{k + 1}) = 0$, the desired bound follows immediately. Otherwise, \eqref{eq:pf-nonlinear-7} implies
\begin{equation*} 
    \frac{1}{F (y^{k + 1})} - \frac{1}{F (y^k)} \geq \frac{F (y^k) - F (y^{k +
   1})}{F (y^k) F (y^{k + 1})} \geq \frac{F (y^k)}{2 D^2 (\sigma^2 + \lambda)
   F (y^{k + 1})} \geq \frac{1}{2 D^2 (\sigma^2 + \lambda)} .
\end{equation*}
The telescoping sum yields
\begin{equation*} 
    \frac{1}{F (y^{k + 1})} \geq \frac{1}{F (y^0)} + \frac{k + 1}{2 D^2
   (\sigma^2 + \lambda)},
\end{equation*}
which implies the desired sublinear convergence bound.
\end{proof}
\section{Details of Baselines} \label{app:baselines}

This section provides the details of each baselines we compared to in \Cref{sec:exp}.

\begin{itemize}    
    \item {\dc} \cite{donti2021dc3} strictly enforces equality constraints by predicting free variables and solving dependent variables from equality constraints.
    The inequality constraint violation are iteratively reduced by a fixed number of gradient descent steps on the free variables.
    
    \item {\hardnet} \cite{min2024hardnet} uses soft-epochs and strictly enforces linear constraints by \eqref{eq:hardnet-layer}.
    We compare to {\hardnet} on NCLPs.

    \item {\optnet} \cite{amos2017optnet} appends a differentiable convex projection problem \eqref{eq:opt-proj} as a repair layer. We compare to {\optnet} on convex constrained problems: NCLPs and convex QCQPs.

    \item {\hproj} \cite{liang2024homeomorphic} is a post-processing method with two stages training: the first stage trains a homeomorphic map and the second stage trains a soft-constraint NN. At inference 
    time, {\hproj} uses the learned homeomorphic map and bisection to project onto the feasible set.
\end{itemize}
\section{Tables for All Test Results in \Cref{sec:opt-learning}} \label{app:tables}

This section includes all the evaluation results on test instances for experiments in \Cref{sec:opt-learning}. For ease of navigation, \Cref{tab:correspondence} summarizes the correspondence between figures presented in the main text and their associated results tables organized by problem class. Columns ``\# of Ineq. Violations'' and ``\# of Eq. Violations'' are counted with threshold $10^{-4}$.

\begin{table}[htbp]
\centering
\begin{tabular}{lll}
\toprule
Tables & Problem Class & Figures \\
\midrule
\Cref{table:NCLP_test} & NCLP & \Cref{fig:test-bar} \\
\Cref{table:qcqp_test} & Convex QCQP & \Cref{fig:test-bar} \\
\Cref{table:noncvxqcqp_test} & Non-Convex QCQP & \Cref{fig:test-bar} \\
\Cref{table:NCLP_lambda} & NCLP & \Cref{fig:noncvx-lambda} \\
\Cref{table:qcqp_10} & Convex QCQP (10 inequality constraints) & \Cref{fig:cvx_qcqp_ineq} \\
\Cref{table:qcqp_50} & Convex QCQP (50 inequality constraints) & \Cref{fig:cvx_qcqp_ineq} \\
\Cref{table:qcqp_100} & Convex QCQP (100 inequality constraints) & \Cref{fig:cvx_qcqp_ineq} \\
\bottomrule
\end{tabular}
\caption{Correspondence between tables, problem classes, and figures in main paper.}
\label{tab:correspondence}
\end{table}


\begin{table*}[htbp]
\centering
\caption{Evaluation metrics on the NCLPs test set.}
\label{table:NCLP_test}
\resizebox{\linewidth}{!}{
\begingroup
\begin{tabular}{lccccccccc}
\toprule
Method & Max Opt. Gap & GMean Opt. Gap & Max Ineq. Error & GMean Ineq. Error & \# Ineq Violations & Max Eq. Error & GMean Eq. Error & \# Eq Violations & Test Time (s) \\
\midrule
DC3 & $8.81 \pm 5.68$ & $2.68 \pm 1.35$ & $(7.16 \pm 16.00) \times 10^{-3}$ & $(1.00 \pm 0.00) \times 10^{-16}$ & $(2.40 \pm 5.37) \times 10^{-4}$ & $(6.03 \pm 1.61) \times 10^{-14}$ & $(6.46 \pm 2.27) \times 10^{-15}$ & $0.00$ & $0.186 \pm 0.074$ \\
HardNet & $(1.02 \pm 0.47) \times 10^{1}$ & $(9.17 \pm 0.99) \times 10^{-1}$ & $(7.09 \pm 1.19) \times 10^{-14}$ & $(1.10 \pm 0.01) \times 10^{-16}$ & $0.00$ & $(1.13 \pm 0.14) \times 10^{-13}$ & $(1.04 \pm 0.03) \times 10^{-14}$ & $0.00$ & $0.116 \pm 0.035$ \\
HProj & $(5.97 \pm 12.04) \times 10^{2}$ & $3.72 \pm 7.12$ & $0.00$ & $0.00$ & $0.00$ & $(2.24 \pm 4.49) \times 10^{-12}$ & $(3.12 \pm 5.58) \times 10^{-14}$ & $0.00$ & $0.049 \pm 0.029$ \\
OptNet & $(6.83 \pm 0.70) \times 10^{-1}$ & $(2.40 \pm 0.03) \times 10^{-1}$ & $0.00$ & $0.00$ & $0.00$ & $(2.24 \pm 0.57) \times 10^{-11}$ & $(3.39 \pm 0.16) \times 10^{-13}$ & $0.00$ & $1.634 \pm 0.249$ \\
SnareNet ($\mathtt{tol}=10^{-4}$) & $(4.57 \pm 0.62) \times 10^{-1}$ & $(8.03 \pm 0.58) \times 10^{-2}$ & $(5.29 \pm 0.67) \times 10^{-5}$ & $(1.02 \pm 0.40) \times 10^{-15}$ & $0.00$ & $(8.34 \pm 1.19) \times 10^{-5}$ & $(2.08 \pm 1.01) \times 10^{-6}$ & $0.00$ & $0.501 \pm 0.076$ \\
SnareNet ($\mathtt{tol}=10^{-6}$) & $(4.58 \pm 0.64) \times 10^{-1}$ & $(8.03 \pm 0.58) \times 10^{-2}$ & $(5.54 \pm 0.63) \times 10^{-7}$ & $(6.34 \pm 2.06) \times 10^{-16}$ & $0.00$ & $(8.72 \pm 0.99) \times 10^{-7}$ & $(1.49 \pm 0.78) \times 10^{-8}$ & $0.00$ & $0.869 \pm 0.112$ \\
SnareNet ($\mathtt{tol}=10^{-8}$) & $(4.57 \pm 0.64) \times 10^{-1}$ & $(8.03 \pm 0.58) \times 10^{-2}$ & $(5.36 \pm 1.09) \times 10^{-9}$ & $(3.91 \pm 0.94) \times 10^{-16}$ & $0.00$ & $(8.37 \pm 1.48) \times 10^{-9}$ & $(9.11 \pm 2.89) \times 10^{-11}$ & $0.00$ & $0.972 \pm 0.152$ \\
SnareNet ($\mathtt{tol}=10^{-10}$) & $(4.57 \pm 0.64) \times 10^{-1}$ & $(8.03 \pm 0.58) \times 10^{-2}$ & $(5.25 \pm 1.43) \times 10^{-11}$ & $(2.44 \pm 0.43) \times 10^{-16}$ & $0.00$ & $(8.00 \pm 1.66) \times 10^{-11}$ & $(6.02 \pm 2.51) \times 10^{-13}$ & $0.00$ & $1.379 \pm 0.272$ \\
SnareNet ($\mathtt{tol}=10^{-12}$) & $(4.56 \pm 0.64) \times 10^{-1}$ & $(8.03 \pm 0.58) \times 10^{-2}$ & $(5.05 \pm 1.16) \times 10^{-13}$ & $(1.51 \pm 0.17) \times 10^{-16}$ & $0.00$ & $(7.72 \pm 1.52) \times 10^{-13}$ & $(6.42 \pm 2.35) \times 10^{-15}$ & $0.00$ & $1.646 \pm 0.605$ \\
\bottomrule
\end{tabular}
\endgroup
}
\end{table*}

\begin{table*}[htbp]
\centering
\caption{Evaluation metrics on the convex QCQPs test set.}
\label{table:qcqp_test}
\resizebox{\linewidth}{!}{
\begingroup
\begin{tabular}{lccccccccc}
\toprule
Method & Max Opt. Gap & GMean Opt. Gap & Max Ineq. Error & GMean Ineq. Error & \# Ineq Violations & Max Eq. Error & GMean Eq. Error & \# Eq Violations & Test Time (s) \\
\midrule
DC3 & $(1.50 \pm 1.10) \times 10^{1}$ & $9.60 \pm 8.14$ & $2.89 \pm 4.08$ & $(2.22 \pm 2.16) \times 10^{-16}$ & $0.70 \pm 1.08$ & $(4.62 \pm 4.23) \times 10^{-14}$ & $(4.12 \pm 2.79) \times 10^{-15}$ & $0.00$ & $0.191 \pm 0.088$ \\
HProj & $(7.78 \pm 17.18) \times 10^{1}$ & $(3.41 \pm 7.57) \times 10^{1}$ & $(3.58 \pm 8.01) \times 10^{2}$ & $(1.10 \pm 2.47) \times 10^{-13}$ & $10.00 \pm 22.36$ & $(2.01 \pm 4.41) \times 10^{-12}$ & $(2.77 \pm 6.08) \times 10^{-13}$ & $0.00$ & $0.260 \pm 0.374$ \\
OptNet & $(1.21 \pm 0.10) \times 10^{-1}$ & $(4.15 \pm 0.01) \times 10^{-2}$ & $(8.09 \pm 2.07) \times 10^{-8}$ & $(2.10 \pm 0.03) \times 10^{-14}$ & $0.00$ & $(1.25 \pm 0.13) \times 10^{-11}$ & $(2.51 \pm 0.16) \times 10^{-13}$ & $0.00$ & $3.044 \pm 0.203$ \\
SnareNet ($\mathtt{tol}=10^{-4}$) & $(1.24 \pm 0.06) \times 10^{-1}$ & $(2.60 \pm 0.26) \times 10^{-2}$ & $(8.63 \pm 0.94) \times 10^{-5}$ & $(7.74 \pm 1.97) \times 10^{-15}$ & $0.00$ & $(8.33 \pm 0.87) \times 10^{-5}$ & $(1.93 \pm 0.45) \times 10^{-6}$ & $0.00$ & $0.807 \pm 0.093$ \\
SnareNet ($\mathtt{tol}=10^{-6}$) & $(1.25 \pm 0.06) \times 10^{-1}$ & $(2.60 \pm 0.27) \times 10^{-2}$ & $(7.91 \pm 0.98) \times 10^{-7}$ & $(2.68 \pm 0.64) \times 10^{-15}$ & $0.00$ & $(7.64 \pm 0.97) \times 10^{-7}$ & $(5.92 \pm 2.64) \times 10^{-9}$ & $0.00$ & $1.356 \pm 0.135$ \\
SnareNet ($\mathtt{tol}=10^{-8}$) & $(1.24 \pm 0.07) \times 10^{-1}$ & $(2.60 \pm 0.26) \times 10^{-2}$ & $(8.15 \pm 0.85) \times 10^{-9}$ & $(9.52 \pm 1.98) \times 10^{-16}$ & $0.00$ & $(7.88 \pm 0.82) \times 10^{-9}$ & $(2.13 \pm 1.23) \times 10^{-11}$ & $0.00$ & $1.780 \pm 0.183$ \\
SnareNet ($\mathtt{tol}=10^{-10}$) & $(1.25 \pm 0.07) \times 10^{-1}$ & $(2.60 \pm 0.26) \times 10^{-2}$ & $(7.56 \pm 0.70) \times 10^{-11}$ & $(3.35 \pm 0.70) \times 10^{-16}$ & $0.00$ & $(7.31 \pm 0.72) \times 10^{-11}$ & $(7.35 \pm 5.48) \times 10^{-14}$ & $0.00$ & $2.231 \pm 0.454$ \\
SnareNet ($\mathtt{tol}=10^{-12}$) & $(1.24 \pm 0.06) \times 10^{-1}$ & $(2.60 \pm 0.26) \times 10^{-2}$ & $(7.38 \pm 0.99) \times 10^{-13}$ & $(1.36 \pm 0.17) \times 10^{-16}$ & $0.00$ & $(7.06 \pm 0.95) \times 10^{-13}$ & $(9.93 \pm 3.87) \times 10^{-16}$ & $0.00$ & $2.603 \pm 0.956$ \\
\bottomrule
\end{tabular}
\endgroup
}
\end{table*}

\begin{table*}[htbp]
\centering
\caption{Evaluation metrics on the non-convex QCQPs test set.}
\label{table:noncvxqcqp_test}
\resizebox{\linewidth}{!}{
\begingroup
\begin{tabular}{lccccccccc}
\toprule
Method & Max Obj. Value & Mean Obj. Value & Max Ineq. Error & GMean Ineq. Error & \# Ineq Violations & Max Eq. Error & GMean Eq. Error & \# Eq Violations & Test Time (s) \\
\midrule
DC3 & $-12.70 \pm 9.39$ & $-15.25 \pm 8.99$ & $(6.66 \pm 10.43) \times 10^{-1}$ & $(1.09 \pm 0.09) \times 10^{-16}$ & $0.12 \pm 0.12$ & $(7.71 \pm 2.16) \times 10^{-15}$ & $(7.28 \pm 0.36) \times 10^{-16}$ & $0.00$ & $0.172 \pm 0.059$ \\
HProj & $12.86 \pm 8.93$ & $-0.47 \pm 9.22$ & $(1.69 \pm 0.70) \times 10^{1}$ & $(3.55 \pm 6.15) \times 10^{-11}$ & $5.85 \pm 8.50$ & $(1.05 \pm 0.70) \times 10^{-14}$ & $(1.05 \pm 0.60) \times 10^{-15}$ & $0.00$ & $0.438 \pm 0.139$ \\
SnareNet ($\mathtt{tol}=10^{-4}$) & $-19.98 \pm 1.89$ & $-20.61 \pm 1.81$ & $(2.40 \pm 1.89) \times 10^{-6}$ & $(6.19 \pm 10.78) \times 10^{-10}$ & $0.00$ & $(1.07 \pm 0.84) \times 10^{-6}$ & $(3.87 \pm 2.92) \times 10^{-7}$ & $0.00$ & $0.124 \pm 0.059$ \\
SnareNet ($\mathtt{tol}=10^{-6}$) & $-21.16 \pm 1.37$ & $-21.72 \pm 1.32$ & $(1.10 \pm 2.27) \times 10^{-7}$ & $(2.81 \pm 5.40) \times 10^{-12}$ & $0.00$ & $(4.48 \pm 9.07) \times 10^{-8}$ & $(2.14 \pm 4.40) \times 10^{-8}$ & $0.00$ & $0.124 \pm 0.020$ \\
SnareNet ($\mathtt{tol}=10^{-8}$) & $-20.61 \pm 1.98$ & $-21.18 \pm 1.91$ & $(3.68 \pm 4.05) \times 10^{-9}$ & $(2.25 \pm 1.54) \times 10^{-13}$ & $0.00$ & $(1.99 \pm 2.19) \times 10^{-9}$ & $(7.84 \pm 8.85) \times 10^{-10}$ & $0.00$ & $0.141 \pm 0.028$ \\
SnareNet ($\mathtt{tol}=10^{-10}$) & $-13.66 \pm 16.95$ & $-15.89 \pm 13.23$ & $(1.11 \pm 2.49) \times 10^{-5}$ & $(1.05 \pm 0.95) \times 10^{-14}$ & $0.00$ & $(6.28 \pm 14.03) \times 10^{-6}$ & $(2.55 \pm 2.97) \times 10^{-12}$ & $0.00$ & $0.616 \pm 1.049$ \\
SnareNet ($\mathtt{tol}=10^{-12}$) & $-13.01 \pm 18.41$ & $-17.40 \pm 9.87$ & $(1.47 \pm 3.28) \times 10^{-6}$ & $(6.86 \pm 3.82) \times 10^{-16}$ & $0.00$ & $(7.90 \pm 17.66) \times 10^{-7}$ & $(8.91 \pm 8.89) \times 10^{-15}$ & $0.00$ & $0.651 \pm 1.117$ \\
\bottomrule
\end{tabular}
\endgroup
}
\end{table*}


\begin{table*}[htbp]
\centering
\caption{Evaluation metrics on the NCLP test set.}
\label{table:NCLP_lambda}
\resizebox{\linewidth}{!}{
\begingroup
\begin{tabular}{lcccccccccc}
\toprule
Method & Obj. Value & Max Opt. Gap & GMean Opt. Gap & Max Ineq. Error & GMean Ineq. Error & \# Ineq Violations & Max Eq. Error & GMean Eq. Error & \# Eq Violations & Test Time (s) \\
\midrule
$\mathtt{SnareNet}$ ($\lambda=0.001$) & $-11.19 \pm 0.29$ & $2.92 \pm 1.73$ & $(4.28 \pm 2.66) \times 10^{-1}$ & $(4.07 \pm 1.46) \times 10^{-9}$ & $(1.72 \pm 0.14) \times 10^{-16}$ & $0.00$ & $(6.57 \pm 2.36) \times 10^{-9}$ & $(1.87 \pm 0.97) \times 10^{-10}$ & $0.00$ & $0.28 \pm 0.12$ \\
$\mathtt{SnareNet}$ ($\lambda=0.01$) & $-11.59 \pm 0.01$ & $(4.28 \pm 0.46) \times 10^{-1}$ & $(7.90 \pm 1.00) \times 10^{-2}$ & $(5.35 \pm 0.88) \times 10^{-9}$ & $(2.71 \pm 0.51) \times 10^{-16}$ & $0.00$ & $(7.88 \pm 1.14) \times 10^{-9}$ & $(7.74 \pm 3.91) \times 10^{-11}$ & $0.00$ & $0.76 \pm 0.74$ \\
$\mathtt{SnareNet}$ ($\lambda=0.1$) & $-11.64 \pm 0.00$ & $(2.18 \pm 0.12) \times 10^{-1}$ & $(3.86 \pm 0.22) \times 10^{-2}$ & $(1.40 \pm 0.22) \times 10^{-8}$ & $(3.58 \pm 0.04) \times 10^{-16}$ & $0.00$ & $(2.08 \pm 0.33) \times 10^{-8}$ & $(4.29 \pm 0.82) \times 10^{-10}$ & $0.00$ & $0.61 \pm 0.26$ \\
$\mathtt{SnareNet}$ ($\lambda=10$) & $-11.67 \pm 0.00$ & $(1.87 \pm 0.16) \times 10^{-2}$ & $(4.29 \pm 0.06) \times 10^{-3}$ & $(8.65 \pm 0.46) \times 10^{-8}$ & $(6.38 \pm 0.16) \times 10^{-16}$ & $0.00$ & $(1.20 \pm 0.06) \times 10^{-7}$ & $(4.98 \pm 0.61) \times 10^{-9}$ & $0.00$ & $2.19 \pm 1.24$ \\
$\mathtt{SnareNet}$ ($\lambda=1$) & $-11.66 \pm 0.00$ & $(6.99 \pm 0.74) \times 10^{-2}$ & $(1.57 \pm 0.05) \times 10^{-2}$ & $(1.96 \pm 0.08) \times 10^{-8}$ & $(5.01 \pm 0.09) \times 10^{-16}$ & $0.00$ & $(3.21 \pm 0.13) \times 10^{-8}$ & $(6.27 \pm 0.93) \times 10^{-10}$ & $0.00$ & $1.31 \pm 0.32$ \\
$\mathtt{SnareNet}$ ($\lambda=5$) & $-11.67 \pm 0.00$ & $(2.74 \pm 0.08) \times 10^{-2}$ & $(6.34 \pm 0.06) \times 10^{-3}$ & $(5.07 \pm 0.25) \times 10^{-8}$ & $(6.78 \pm 0.12) \times 10^{-16}$ & $0.00$ & $(7.52 \pm 0.39) \times 10^{-8}$ & $(3.19 \pm 0.24) \times 10^{-9}$ & $0.00$ & $2.22 \pm 1.17$ \\
\bottomrule
\end{tabular}
\endgroup
}
\end{table*}


\begin{table*}[htbp]
\centering
\caption{Evaluation metrics on convex QCQP with 10 inequality constraints test set.}
\label{table:qcqp_10}
\resizebox{\linewidth}{!}{
\begingroup
\begin{tabular}{lcccccccc}
\toprule
Method & Max Opt. Gap & GMean Opt. Gap & GMean Ineq. Error & \# Ineq Violations & Max Eq. Error & GMean Eq. Error & \# Eq Violations & Test Time (s) \\
\midrule
Optimizer (Gurobi) & $1.42 \times 10^{-14}$ & $4.48 \times 10^{-16}$ & $0.00$ & $0.00$ & $5.43 \times 10^{-10}$ & $1.06 \times 10^{-14}$ & $0.00$ & $428.44$ \\
DC3 & $(7.96 \pm 11.33) \times 10^{1}$ & $(2.21 \pm 1.94) \times 10^{1}$ & $(3.52 \pm 7.87) \times 10^{-10}$ & $0.97 \pm 1.88$ & $(1.48 \pm 1.11) \times 10^{-13}$ & $(1.40 \pm 0.94) \times 10^{-14}$ & $0.00$ & $0.08 \pm 0.07$ \\
HProj & $(2.40 \pm 1.99) \times 10^{-1}$ & $(1.60 \pm 1.42) \times 10^{-1}$ & $0.00$ & $0.00$ & $(1.08 \pm 0.92) \times 10^{-13}$ & $(1.41 \pm 0.98) \times 10^{-14}$ & $0.00$ & $0.02 \pm 0.02$ \\
SnareNet (Ours) & $(1.74 \pm 0.08) \times 10^{-1}$ & $(3.16 \pm 0.29) \times 10^{-2}$ & $(4.78 \pm 0.92) \times 10^{-15}$ & $0.00$ & $(5.13 \pm 1.28) \times 10^{-5}$ & $(6.35 \pm 1.68) \times 10^{-7}$ & $0.00$ & $0.25 \pm 0.01$ \\
\bottomrule
\end{tabular}
\endgroup
}
\end{table*}

\begin{table*}[htbp]
\centering
\caption{Evaluation metrics on convex QCQP with 50 inequality constraints test set.}
\label{table:qcqp_50}
\resizebox{\linewidth}{!}{
\begingroup
\begin{tabular}{lcccccccc}
\toprule
Method & Max Opt. Gap & GMean Opt. Gap & GMean Ineq. Error & \# Ineq Violations & Max Eq. Error & GMean Eq. Error & \# Eq Violations & Test Time (s) \\
\midrule
Optimizer (Gurobi) & $7.11 \times 10^{-15}$ & $2.74 \times 10^{-16}$ & $1.01 \times 10^{-16}$ & $0.00$ & $1.58 \times 10^{-8}$ & $3.89 \times 10^{-13}$ & $0.00$ & $1742.79$ \\
DC3 & $(3.37 \pm 1.78) \times 10^{1}$ & $(1.59 \pm 0.76) \times 10^{1}$ & $(2.39 \pm 4.72) \times 10^{-13}$ & $5.33 \pm 5.07$ & $(7.63 \pm 5.12) \times 10^{-14}$ & $(5.42 \pm 3.01) \times 10^{-15}$ & $0.00$ & $0.18 \pm 0.08$ \\
HProj & $(7.78 \pm 17.18) \times 10^{1}$ & $(3.41 \pm 7.57) \times 10^{1}$ & $(1.10 \pm 2.47) \times 10^{-13}$ & $10.00 \pm 22.36$ & $(2.01 \pm 4.41) \times 10^{-12}$ & $(2.77 \pm 6.08) \times 10^{-13}$ & $0.00$ & $0.26 \pm 0.37$ \\
SnareNet (Ours) & $(1.25 \pm 0.06) \times 10^{-1}$ & $(2.60 \pm 0.27) \times 10^{-2}$ & $(2.68 \pm 0.64) \times 10^{-15}$ & $0.00$ & $(7.64 \pm 0.97) \times 10^{-7}$ & $(5.92 \pm 2.64) \times 10^{-9}$ & $0.00$ & $0.39 \pm 0.01$ \\
\bottomrule
\end{tabular}
\endgroup
}
\end{table*}

\begin{table*}[htbp]
\centering
\caption{Evaluation metrics on convex QCQP with 100 inequality constraints test set.}
\label{table:qcqp_100}
\resizebox{\linewidth}{!}{
\begingroup
\begin{tabular}{lcccccccc}
\toprule
Method & Max Opt. Gap & GMean Opt. Gap & GMean Ineq. Error & \# Ineq Violations & Max Eq. Error & GMean Eq. Error & \# Eq Violations & Test Time (s) \\
\midrule
Optimizer (Gurobi) & $7.11 \times 10^{-15}$ & $2.57 \times 10^{-16}$ & $1.04 \times 10^{-16}$ & $0.00$ & $1.57 \times 10^{-7}$ & $7.90 \times 10^{-13}$ & $0.00$ & $3563.30$ \\
DC3 & $(1.14 \pm 0.66) \times 10^{1}$ & $6.86 \pm 4.63$ & $(1.42 \pm 0.91) \times 10^{-16}$ & $0.65 \pm 1.37$ & $(1.76 \pm 1.27) \times 10^{-14}$ & $(1.70 \pm 0.89) \times 10^{-15}$ & $0.00$ & $0.48 \pm 0.08$ \\
HProj & $(2.35 \pm 5.22) \times 10^{2}$ & $(1.53 \pm 3.40) \times 10^{2}$ & $(5.40 \pm 12.08) \times 10^{-11}$ & $20.07 \pm 44.86$ & $(1.83 \pm 4.03) \times 10^{-12}$ & $(3.04 \pm 6.73) \times 10^{-13}$ & $0.00$ & $0.44 \pm 0.69$ \\
SnareNet (Ours) & $(1.74 \pm 0.08) \times 10^{-1}$ & $(3.16 \pm 0.29) \times 10^{-2}$ & $(1.88 \pm 0.22) \times 10^{-15}$ & $0.00$ & $(7.77 \pm 0.54) \times 10^{-7}$ & $(2.80 \pm 0.97) \times 10^{-9}$ & $0.00$ & $0.68 \pm 0.12$ \\
\bottomrule
\end{tabular}
\endgroup
}
\end{table*}
\section{Neural Control Policies Experiment Details} \label{app:cbf}

The system state is $x(t) = ( p_x(t), p_y(t), \theta(t), v(t), w(t)) \in \mbb{R}^5$ at time $t$, where $p_x, p_y$ are the coordinates, $\theta$ is the heading angle, $v$ is the linear velocity, and $w$ is the angular velocity.
The control $u(t) = (a(t), \alpha(t)) \in \mbb{R}^2$ includes linear acceleration $a$ and angular acceleration $\alpha$.
The control changes the system state by the linear dynamics:
$\dot{x} = F(x) + G u$,
where $F: \mbb{R}^5 \rightarrow \mbb{R}^5$ and $G \in \mbb{R}^{5 \times 2}$ are 
\begin{align*}
    F(x) = \begin{bmatrix}
        v \cos \theta\\
        v \sin \theta\\
        w\\
        0\\
        0
    \end{bmatrix} \quad \text{ and } \quad 
    G(x) = \begin{bmatrix}
        0 & 0\\
        0 & 0\\
        0 & 0\\
        1 & 0\\
        0 & 1\\
    \end{bmatrix}.
\end{align*}
Our neural policy is trained by minimizing a quadratic cost over a finite time horizon:
\begin{equation*}
    \Delta t\sum_{i=0}^{n_t} x(t_i)^T Q x(t_i) + c \|\pi_{\theta}(x(t_i))\|^2,
\end{equation*}
where $Q = \text{diag}(100,100,0,0.1,0.1)$ penalizes deviations from the target position $(0,0)$ and velocity, $c = 0.1$ weights the control effort, and $n_t = 10$ steps for trajectories.
This objective encourages trajectories to reach the origin $(0,0)$ with minimal control effort.

For an elliptical obstacle centered at $(c_x,c_y)$ with axes $(r_x,r_y)$, we use the higher order CBF $h(x) = \dot{h}_e(x) + \kappa h_e(x)$ with $\kappa > 0$, \cite{min2024hardnet} in our experiments, where
\begin{align*}
    h_e(x) = \big(\frac{c_x-p_x+\ell \cos \theta}{r_x}\big)^2+\big(\frac{c_y-p_y+\ell \sin \theta}{r_y}\big)^2-1.
\end{align*}
For training, we uniformly sample initial states from two boxes in the state space, differing only in position: Box 1 with positions $(p_x, p_y) \in [-5.0, -5.5] \times [7.5, 8.0]$ and Box 2 with positions $(p_x, p_y) \in [-8.5, -8.0] \times [5.5, 6.0]$.
Both boxes share the same heading angle range $\theta \in [-\frac{\pi}{4}, -\frac{\pi}{8}]$ and zero initial velocities ($v = w = 0$).

\Cref{tab:cbf_base_seed89_dc3_snarenet} shows the detailed evaluation statistics: {\ournet} achieves a 99.4\% feasibility rate on the test set compared to {\dc}'s 64.9\% feasibility rate, demonstrating the effectiveness of {\ournet} in enforcing safety constraints in neural control policies.

\begin{table}[H]
\centering
\caption{Evaluation statistics on 84 CBF test instances.}
\label{tab:cbf_base_seed89_dc3_snarenet}
\begin{tabular}{lccccc}
\toprule
Method & Dataset & Objective & \# Feasible (1e-4) & Max Ineq. Error & GMean Ineq. Error \\
\midrule
DC3 & Box 1 & $3656$ & $58$ & $6.555$ & $6.785 \times 10^{-15}$ \\
SnareNet & Box 1 & $4060$ & $84$ & $4.21 \times 10^{-8}$ & $1.763 \times 10^{-14}$ \\
\midrule
DC3 & Box 2 & $4269$ & $51$ & $4.17$ & $1.913 \times 10^{-14}$ \\
SnareNet & Box 2 & $5226$ & $83$ & $0.7483$ & $4.038 \times 10^{-14}$ \\
\bottomrule
\end{tabular}
\end{table}
\section{Algorithms of {\ournet}}

\Cref{alg:repair} provides the pseudocode for {\ournet}'s repair layer. 
The training algorithm of {\ournet} is summarized in \Cref{alg:ournet}.

\begin{algorithm}[t]
\caption{Repair layer in {\ournet}}
\label{alg:repair}
\begin{algorithmic}[1]
\STATE \textbf{assume} $\hat{y} = \mcal{M}_{\theta}(x)$ and constraints $\ell \leq g(y) \leq u$ 

\FUNCTION{$\mcal{R}$($\hat{y}, \lambda, \varepsilon$)}
    \STATE \textbf{init} $y^0 = \hat{y}$
    \FOR{$k = 0, 1, 2, \ldots$ until convergence}
        \STATE \textbf{compute} $z^k$ using \cref{eq:z-adarelx} with $\varepsilon$
        \STATE \textbf{update} $y^{k+1}$ using \cref{eq:LM-update} with $\lambda$
    \ENDFOR
    \STATE \textbf{return} $\check{y} = y^{k+1}$
\ENDFUNCTION
\end{algorithmic}
\end{algorithm}

\begin{algorithm}[t]
\caption{Training paradigm of {\ournet}}
\label{alg:ournet}
\begin{algorithmic}[1]
\FUNCTION{TRAIN($\mcal{X}_{\train}$)}
    \STATE \textbf{init} neural network $\mcal{M}_\theta : \mathbb{R}^d \to \mathbb{R}^n$ and $\lambda \geq 0$
    \STATE \textbf{init} constraint violation $\varepsilon_x^{(0)}$ for $x \in \mcal{X}_{\train}$
    \STATE \textbf{set} a relaxation parameter schedule $\{\varepsilon_x^{(t)}\}_{t=1}^{T}$
    \FOR{epoch $t = 1, 2, \ldots, T$}
        \FOR{mini-batch $\mcal{B} \subset \mcal{X}_{\train}$}
            \STATE \textbf{compute} approximate solution $\hat{y}_i = \mcal{M}_\theta(x_i)$ for all $x_i \in \mbf{B}$
            \STATE \textbf{repair} to relaxed feasible solution $\check{y}_i = \mcal{R}(\hat{y}_i, \lambda, \varepsilon_{x_i}^{(t)})$ for all $x_i \in \mbf{B}$
            \STATE \textbf{compute} batch loss $\mcal{L}_{\mcal{B}}(\theta)$
            \STATE \textbf{update} $\theta$ using $\nabla_\theta \mcal{L}_{\mcal{B}}(\theta)$
        \ENDFOR
    \ENDFOR
\ENDFUNCTION
\end{algorithmic}
\end{algorithm}
\section{Soft Constraint Training Can Be Counterproductive} \label{app:soft}

Soft constraint training is commonly employed as a warm-up strategy for hard-constrained neural networks. 
However, as shown in \Cref{fig:soft_hardnet_max}, this approach can be counterproductive. 
While {\hardnet} achieves a lower optimality gap after 1000 soft epochs than after 500, 
this reduction does not persist: 
once hard constraint training begins, the optimality gap increases sharply, negating any apparent progress. 
Additional soft constraint epochs constitute an inefficient use of computational resources, as the gains from soft constraint training do not transfer to improved final model performance.

\begin{figure*}[h]
    \centering
    \includegraphics[width=0.9\textwidth]{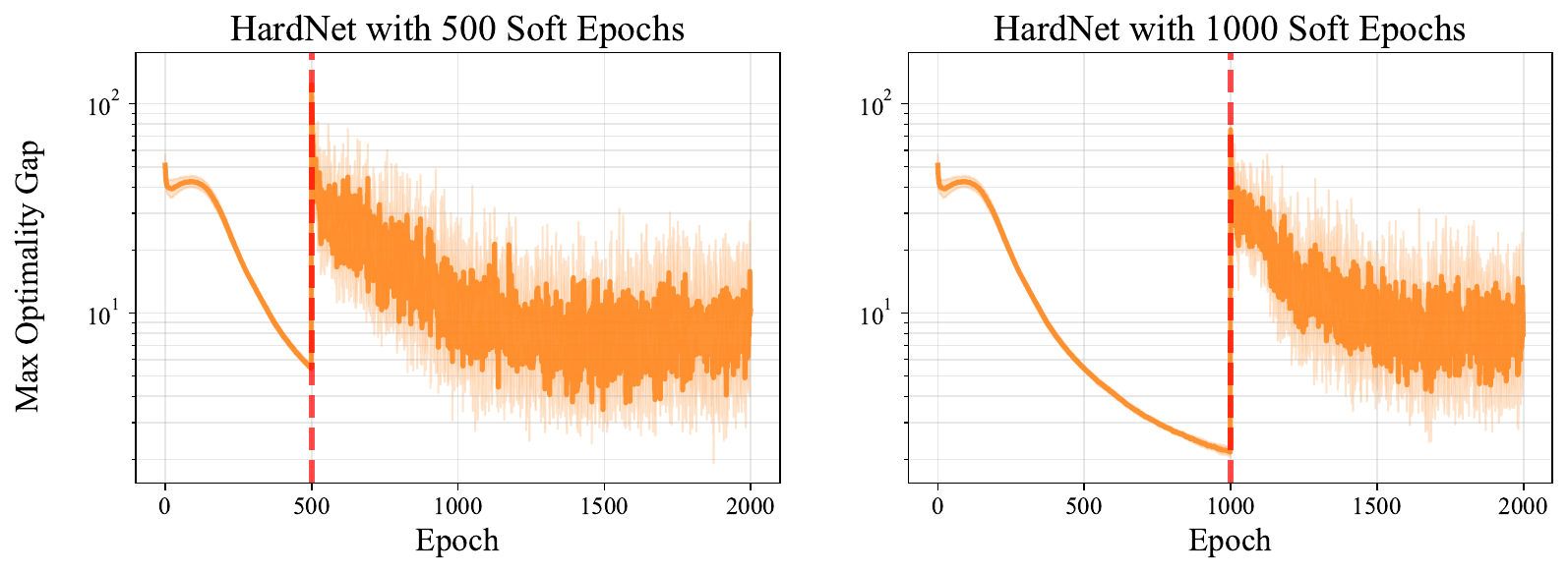}
    \caption{Maximum optimality gap over all NCP validation instances for {\hardnet} trained with soft constraints for 500 and 1000 epochs and hard constraints for the rest.}
    \label{fig:soft_hardnet_max}
\end{figure*}

\section{Scaling of Computational Resources} \label{app:scaling}

\Cref{tab:noncvx_scaling_metrics} shows the training time, memory usage, and number of repair iterations taken for SnareNet on non-convex programs (NCPs) of varying sizes. 
The experiments were run on an Intel 6426Y CPU and an NVIDIA L40S GPU, constrained to 16, 16, and 32 GB of GPU memory per run for 100, 500, and 1000 constraints, respectively. 
Note that SnareNet may use less computational resources for larger problems under a fixed number of constraints since the problems can be less constrained and require fewer repair iterations to achieve feasibility.
For example, the 500 and 1000 variable problems with 100 constraints are cheaper (both in time and memory) than the 100 variable problem with 100 constraints.
\Cref{tab:noncvx_scaling_metrics} indicates that SnareNet is more suitable for problems with $m \ll n$.

\begin{table}[H]
\centering
\caption{Total training time (sec) / memory usage / maximum repair iterations on NCPs of varying sizes.}
\label{tab:noncvx_scaling_metrics}
\begin{tabular}{llll}
\toprule
 & \shortstack{100 constraints \\ (50 eq. + 50 ineq.)} & \shortstack{500 constraints \\ (50 eq. + 450 ineq.)} & \shortstack{1000 constraints \\ (50 eq. + 950 ineq.)} \\
\midrule
100 var & 3674.4s / 2.3GB / 32 & 3496.9s / 4.4GB / 100 & 5832.5s / 4.9GB / 100 \\
500 var & 869.1s / 1.3GB / 3 & 6493.8s / 11.6GB / 11 & 29301.7s / 23.8GB / 30 \\
1000 var & 1107.3s / 1.3GB / 3 & 2724.5s / 3.3GB / 3 & $>$86400.0s / 31.1GB / 8 \\
\bottomrule
\end{tabular}
\end{table}


\end{document}